\documentclass{article}

\usepackage{microtype}
\usepackage{graphicx}
\usepackage{subcaption}
\usepackage{booktabs} 
\usepackage{placeins}   

\usepackage{hyperref}

\usepackage{amsmath,amsfonts,bm}

\def\eqref#1{equation~\ref{#1}}

\def\1{\bm{1}}

\def\rvx{{\mathbf{x}}}
\def\rvy{{\mathbf{y}}}

\DeclareMathAlphabet{\mathsfit}{\encodingdefault}{\sfdefault}{m}{sl}
\SetMathAlphabet{\mathsfit}{bold}{\encodingdefault}{\sfdefault}{bx}{n}

\def\gB{{\mathcal{B}}}

\def\gD{{\mathcal{D}}}

\def\gV{{\mathcal{V}}}

\def\gX{{\mathcal{X}}}

\def\sR{{\mathbb{R}}}

\DeclareMathOperator*{\argmax}{arg\,max}
\DeclareMathOperator*{\argmin}{arg\,min}

\usepackage[makeroom]{cancel}

\usepackage[preprint]{icml2026}

\usepackage{amsmath}
\usepackage{amssymb}
\usepackage{mathtools}
\usepackage{amsthm}

\usepackage[capitalize,noabbrev]{cleveref}

\theoremstyle{plain}
\newtheorem{theorem}{Theorem}[section]

\theoremstyle{definition}

\theoremstyle{remark}
\newtheorem{remark}[theorem]{Remark}

\usepackage[textsize=tiny]{todonotes}

\icmltitlerunning{Active Flow Matching}

\begin{document}

\twocolumn[
\icmltitle{Active Flow Matching}



\icmlsetsymbol{equal}{*}

\begin{icmlauthorlist}
\icmlauthor{Yashvir S. Grewal}{ANU,data61}
\icmlauthor{Daniel M.\ Steinberg}{data61}
\icmlauthor{Thang D.~Bui}{ANU}
\icmlauthor{Cheng Soon Ong}{ANU,data61}
\icmlauthor{Edwin V.\ Bonilla}{data61}
\end{icmlauthorlist}

\icmlaffiliation{data61}{Data61, CSIRO, Australia}
\icmlaffiliation{ANU}{School of Computing, College of Systems and Society, Australian National University, Australia}

\icmlcorrespondingauthor{Yashvir Grewal}{Yashvir.Grewal@anu.edu.au}

\icmlkeywords{Machine Learning, ICML}

\vskip 0.3in
]



\printAffiliationsAndNotice{}  

\begin{abstract}
Discrete diffusion and flow matching models capture complex, non-additive and non-autoregressive structure in high-dimensional objective landscapes through parallel, iterative refinement. 
However, their implicit generative nature precludes direct integration with principled variational frameworks for online black-box optimisation, such as variational search distributions (VSD) and conditioning by adaptive sampling (CbAS). 
We introduce \emph{Active Flow Matching (AFM)}, which reformulates variational objectives to operate on conditional endpoint distributions along the flow, enabling gradient-based steering of flow models toward high-fitness regions while preserving the rigour of VSD and CbAS. We derive forward and reverse Kullback-Leibler (KL) variants using self-normalised importance sampling. Across a suite of online protein and small molecule design tasks, forward-KL AFM consistently performs competitively compared to state-of-the-art baselines, demonstrating effective exploration-exploitation under tight experimental budgets.
\end{abstract}

\section{Introduction}
%
%

Complex  non-additive interactions characteristic of high-dimensional objective landscapes such as those arising in protein design \citep{Starr2016EpistasisProtein,Phillips2008Epistasis} limit the effectiveness of standard autoregressive (AR) models. Indeed, these interactions, known as \textit{epistatic} in protein evolution, violate the underlying sequential factorisation of AR models, which commits to each token before observing its downstream context. 

As an alternative, discrete flow matching (DFM) generates sequences through parallel, iterative refinement \citep{Austin2021D3PM,Gat2024DFM}. At each step, token updates are conditioned on the full sequence context, allowing the model to coordinate long-range dependencies that sequential factorisation can fail to capture. Consequently, DFM models now match or exceed AR baselines across challenging protein and RNA design tasks \citep{Alamdari2023EvoDiff,Watson2023RFdiffusion,Ingraham2023Chroma}.

However, translating these generative capabilities into practical discoveries requires satisfying finite experimental budgets. 
Not only the design space is combinatorial (e.g., $20^{20}\!\approx\!10^{26}$ for 20-residue peptides) but also experimental validation remains expensive \citep{ScrippsSPRFees,DukeBIAFees}. To this end, active generation frameworks such as variational search distributions \citep[VSD,][]{Steinberg2025VSD} and conditioning by adaptive sampling \citep[CbAS,][]{Brookes2019CbAS} generate plausible sequences \textit{and} concentrate probability mass on rare, high-fitness regions, while maintaining diversity. Although methodologically principled, as they are underpinned by approximate probabilistic inference,  these approaches come with their own limitation: that of requiring a tractable density $q_\phi(\rvx)$, either to evaluate log-probabilities directly (CbAS) or to estimate score-function gradients (VSD).

Thus, it stands to reason that combining the flexibility of DFM models with active generation frameworks such as VSDs or CbAS provides a promising direction of research. However,   
state-of-the-art DFM and  discrete diffusion models are \emph{implicit} generators: 
they do not yield normalised distributions over discrete sequences. Consequently, evaluating or differentiating \(\log q_\phi(\rvx)\) is generally intractable. 
More specifically, for DFM, current formulations provide no simple closed-form mass function \citep{Lipman2022FlowMatching,Gat2024DFM}; and for discrete diffusion, exact \(\log q_\phi(\rvx)\) requires summing over exponentially many corruption paths \citep{Austin2021D3PM}. As a result, a na\"ive application of active generation methods that require \(\log q_\phi(\rvx)\) or its gradient \(\nabla_\phi \log q_\phi(\rvx)\) to these generative models is challenging.

\paragraph{Active Flow Matching (AFM).}
We address this challenge by reformulating variational objectives to operate on the conditional endpoint distributions that flow models \emph{do} provide, rather than on the intractable marginal. This allows us to steer discrete flow models toward high-fitness regions using principled exploration-exploitation trade-offs inherent to VSD and CbAS.
We derive both forward-KL and reverse-KL variants of AFM using self-normalised importance sampling, and introduce a mixture proposal that balances exploration, exploitation, and refinement. Across a suite of online protein and small molecule design tasks, forward-KL AFM performs competitively compared to existing baselines, and in many cases, discovers higher-scoring designs under tight experimental budgets.


\section{Background}
\subsection{Active Generation}

Consider a discrete design space $\gX$ (e.g., $\gV^L$ for sequences of length $L$ and vocabulary size $\gV$) with an unknown fitness function $f: \gX \to \sR$. We can only evaluate $f$ through expensive experiments that return noisy measurements:
\begin{align}
y = f(\rvx) + \epsilon, \quad \epsilon \sim p(\epsilon), \quad 
\mathbb{E}[\epsilon] = 0. \nonumber
\end{align}
Our objective is to discover designs in the \emph{super level-set} 
$\mathcal{S}_\tau = \{\rvx \in \mathcal{X} : y \geq \tau\}$ 
for a specified threshold $\tau$. Crucially, $\mathcal{S}_\tau$ is 
exponentially rare in high-dimensional spaces, making random 
sampling from any fixed prior $p(\rvx)$ ineffective. \citet{Steinberg2025VSD} formalise this as the \emph{active generation} problem: starting from initial observations $\gD_0 = \{(\rvx_n, y_n)\}_{n=1}^N$, iteratively proposing batches 
$\gB_r = \{\rvx_b\}_{b=1}^B$ for evaluation over $R$ rounds, and 
accumulating data $\gD_r = \gD_{r-1} \cup 
\{(\rvx_b, y_b)\}_{b \in \gB_r}$. The goal is to learn a 
generative model $q^\phi_{\mathcal{D}_r}(\rvx) \approx p(\rvx \mid y \geq \tau)$ that concentrates probability on $\mathcal{S}_\tau$ while maintaining diversity for batch evaluations. Active generation is also used for black-box optimisation, $\argmax_\rvx f(\rvx)$, by monotonically increasing the threshold such that $\tau_r > \tau_{r-1}$.

This poses three key challenges: (i) the generative model must adapt to sparse, sequential feedback rather than a fixed dataset, (ii) batch proposals require exploration-exploitation tradeoffs absent in supervised learning, and (iii) the discrete, combinatorial nature of $\mathcal{X}$ precludes \textit{direct} gradient-based search.
 VSD and CbAS both take a density estimation approach to solving this problem, with CbAS using the forward Kullback-Leibler (KL) divergence,
\begin{equation}
\phi^*_r = \argmin_\phi
\mathrm{KL}\!\left( 
 p(\rvx \mid y \geq \tau) \|
 q^\phi_{\mathcal{D}_r}(\rvx)
\right),
\end{equation}
while VSD uses the reverse,
\begin{equation}
\phi^*_r = \argmin_\phi
\mathrm{KL}\!\left( 
 q^\phi_{\mathcal{D}_r}(\rvx)\|
 p(\rvx \mid y \geq \tau)
\right).
\end{equation}
It is worth noting that CbAS was originally design for offline optimisation tasks, but \cite{Steinberg2025VSD} and others have adapted it to the online setting.

One key aspect of an efficient implementation of CbAS and VSD in \citet{Steinberg2025VSD,Steinberg2025AGPS}
is the observation that the conditioning set $\mathbb{I}\{ y \geq \tau \}$ can be approximated efficiently
by a binary classification task, i.e. by a class probability estimator $p(z=1 | x)$, where $z=1$
when $y \geq \tau$.

\subsection{Discrete Flow Matching}

Discrete flow matching \citep{Gat2024DFM,Campbell2024Generative} provides a principled framework for non-autoregressive generation of discrete sequences. The key insight is that while samples remain discrete at all times, probability mass flows continuously along a time-dependent path, enabling iterative refinement through parallel updates across all positions in the sequence. 
We next provide a concise summary of discrete flow matching; for more details, we refer the reader to \citet{lipman2024flowmatchingguidecode}.

\paragraph{Setup.} Consider discrete sequences $\rvx \in \gV^L$ where each of $L$ positions takes one of $|\gV|$ vocabulary values. The generative modelling task is to transform samples from a simple \emph{source} distribution $p(\rvx_0)$ into samples from the \textit{data} distribution $q(\rvx_1)$. Two natural choices for the source are: (i) \emph{uniform}: $p^{\text{unif}}(\rvx) = 1/|\mathcal{V}|$ for each position, and (ii) \emph{mask}: $p^{\text{mask}}(\rvx) = \delta_M(\rvx)$ concentrating mass on a special mask token $M$. 
For a single token position, we use $\delta_y(x)$ to denote the \emph{Kronecker delta}, which represents a point mass distribution that places all probability mass on state $y$: $\delta_y(x) = 1  \text{ if } x = y,  \text{ else } 0$.
For sequences, $\delta_\rvy(\rvx)$ is 1 if and only if all positions match: $\delta_\rvy(\rvx) = \prod_{i=1}^L \delta_{y^i}(x^i)$.

During training, we access pairs $(\rvx_0, \rvx_1)$ from a coupling $\pi(\rvx_0, \rvx_1)$ satisfying marginal constraints $\sum_{\rvx_1} \pi(\rvx_0, \rvx_1) = p(\rvx_0)$ and $\sum_{\rvx_0} \pi(\rvx_0, \rvx_1) = q(\rvx_1)$. The simplest choice is \emph{independent coupling}: $\pi(\rvx_0, \rvx_1) = p(\rvx_0) q(\rvx_1)$, though optimal transport couplings may improve training \citep{Pooladian2023Multisample}.

\paragraph{Probability paths.} We define a time-dependent \emph{probability path} $p_t(x)$ that smoothly interpolates between source and target over $t \in [0,1]$, satisfying $p_0 = p$ and $p_1 = q$. The path is constructed via a \emph{conditional probability path} $p_t(x \mid x_0, x_1)$ that interpolates between each training pair. For the mask source, a natural choice is \emph{convex interpolation}:
\begin{equation}
p_t(x \mid x_0, x_1) = (1 - \kappa_t) \delta_{x_0}(x) + \kappa_t \delta_{x_1}(x)
\label{eq:conditional_path}
\end{equation}
where $\kappa_t : [0,1] \to [0,1]$ is a monotonically increasing \emph{scheduler} satisfying $\kappa_0 = 0$ and $\kappa_1 = 1$. Common choices include linear ($\kappa_t = t$) or quadratic ($\kappa_t = t^2$) schedules.
Equation~\ref{eq:conditional_path} describes a stochastic process where at time $t$, we observe the source token $x_0$ with probability $(1-\kappa_t)$ and the target token $x_1$ with probability $\kappa_t$. As time progresses, the target is gradually revealed. Crucially, the state at time $t$ is always exactly one of $\{x_0, x_1\}$. The interpolation occurs in \emph{probability space}, not token space. The marginal path can be obtained by $p_t(x) = \sum_{x_0, x_1} p_t(x \mid x_0, x_1) \pi(x_0, x_1)$.

\paragraph{Continuous-time Markov chain dynamics.}
To sample from the target distribution $q$, we require a mechanism to evolve samples along the probability path from $t=0$ to $t=1$. In discrete state spaces, this evolution is governed by a continuous-time Markov chain \citep{Campbell2022Continuous} characterised by a time-dependent rate matrix. For an infinitesimal timestep $dt$, the probability of transitioning from state $x_t$ to state $x'$ is given by a rate $R_t(x' \mid x_t) dt$. 
In practice, trajectories are simulated with finite timesteps $h$:
\begin{equation}
p(x_{t+h} = x' \mid x_t) = \delta_{x_t}(x') + h \cdot u_t(x' \mid x_t) + o(h)
\end{equation}
where $u_t(x' \mid x_t)$ is the \emph{probability velocity} targeting state $x'$, satisfying $\sum_{x'} u_t(x' \mid x_t) = 0$ and $u_t(x' \mid x_t) \geq 0$ for $x' \neq x_t$ to ensure valid probability transitions.

\paragraph{Generating velocity from posteriors.}
A central result of \citet{Gat2024DFM} shows that the probability velocity generating the path \eqref{eq:conditional_path} is fully determined by the \emph{posterior distribution} $p_{1|t}(x_1 \mid x_t)$, which estimates the probability of the final target $x_1$ given the current state $x_t$:
\begin{equation}
u_t(x' \mid x_t) = \frac{\dot{\kappa}_t}{1 - \kappa_t} [p_{1|t}(x' \mid x_t) - \delta_{x_t}(x')]
\label{eq:velocity}
\end{equation}
where $\dot{\kappa}_t$ is the time derivative of the scheduler. The term $\delta_{x_t}(x')$ ensures rows sum to zero, satisfying the Kolmogorov forward equation. This velocity yields an elegant interpretation: at each timestep, probability mass flows \emph{toward} tokens $x'$ that are likely targets (according to $p_{1|t}$) and \emph{away} from the current token $x_t$, with a rate controlled by the signal-to-noise ratio $\dot{\kappa}_t/(1-\kappa_t)$. Early in generation (small $\kappa_t$), evolution is slow and exploratory; as $\kappa_t \to 1$, evolution accelerates to lock onto the target.


\paragraph{Learning the posterior.}
Since the true posterior $p_{1|t}(x_1 \mid x_t)$ is intractable, we train a parameterised neural network $q_\phi(x_1 \mid x_t, t)$ to approximate it. The standard training objective minimises the expected cross-entropy between the model prediction and the true target $x_1$:
\begin{equation}
\mathcal{L}_{\mathrm{CE}}(\phi) = \mathbb{E}_{t, x_0, x_1, x_t} [-\log q_\phi(x_1 \mid x_t, t)].
\label{eq:dfm_loss}
\end{equation}
This objective corresponds to a denoising task: (a) sample a time $t \sim \mathcal{U}[0,1]$ and a data pair $(x_0, x_1) \sim \pi$; (b) sample an intermediate state $x_t$ from the conditional path $p_t(x \mid x_0,x_1)$ in \eqref{eq:conditional_path}; and (c) update $\phi$ to maximise the likelihood of the clean target $x_1$ given the noisy state $x_t$.


\paragraph{Sampling algorithm.}
At inference, we initialise $x_0 \sim p_0$ and iteratively evolve the sample using the Euler-discretised dynamics. For timesteps $t = 0, h, 2h, \ldots, Kh$, where $h = 1/K$, the state $x_t$ is updated to $x_{t+h}$ by sampling from the categorical transition kernel, $x_{t+h} \sim \text{Cat}\left( \delta_{x_t}(\cdot) + h \cdot u_t(\cdot \mid x_t) \right)$.
For a mask source $p_0(x) = \delta_M(x)$, the process begins with a fully masked sequence.  As $t$ advances, the velocity field $u_t$ drives probability mass away from the mask token $M$ and toward specific token predicted by the posterior, effectively `unmasking' the sequence. The number of steps $K$ trades off sample quality against computational cost \citep{Gat2024DFM}.

\paragraph{Extension to sequences.}
For sequences of length $L$, the generative process factorises over positions, but couples through the velocity. Specifically, the conditional probability path factorises independently:
\begin{equation}
p_t(\rvx \mid \rvx_0, \rvx_1) = \prod_{i=1}^L p_t(x^i \mid x_0^i, x_1^i).
\end{equation}
However, the learned velocity couples these positions. The approximate posterior $q_\phi(x_1^i \mid \rvx_t, t)$ predicts the target token at position $i$ by conditioning on the \emph{entire} current sequence $\rvx_t$. This allows the model to capture complex, non-local dependencies while maintaining $O(1)$ parallel generation steps. This contrasts with autoregressive models, which require $O(L)$ sequential steps.

\paragraph{The implicit generator problem.}
While discrete flow matching excels at capturing global structure, it is fundamentally an \emph{implicit} generator. The model optimises the cross-entropy objective \eqref{eq:dfm_loss} to learn the conditional posterior $q_\phi(\rvx_1 \mid \rvx_t, t)$, which defines the vector field. However, this formulation provides \emph{no closed-form expression for the marginal likelihood $q_\phi(\rvx)$}. Directly computing the likelihood $\log q_\phi(\rvx)$ requires marginalizing over all possible stochastic trajectories from the source to $\rvx$:
\begin{equation}
q_\phi(\rvx) = \sum_{\rvx_0} p(\rvx_0) \sum_{\text{paths } \gamma: \rvx_0 \to \rvx} \mathbb{Q}_\phi[\gamma].
\end{equation}
For discrete sequences, the number of paths is combinatorial in $L$ and the number of steps $K$, rendering this summation intractable. Consequently, the gradient $\nabla_\phi \log q_\phi(\rvx)$ is also unavailable.

This intractability creates a conflict with principled active generation: VSD requires $\nabla_\phi \log q_\phi(\rvx)$ to estimate the evidence lower bound (ELBO) gradient, while CbAS requires explicit density ratios involving $q_\phi(\rvx)$. Standard active generation frameworks are thus mathematically incompatible with implicit flow models. We next resolve this by reformulating these objectives to operate on the \emph{conditional endpoint distributions}, which are naturally provided by the flow.


\section{Active Flow Matching}

Having established that discrete flow matching models provide conditional endpoint distributions but lack a tractable marginal likelihood $\log q_\phi(\rvx)$, we reformulate variational objectives to operate directly on these conditional distributions along the flow path.

\subsection{Problem Setup}
We consider an active generation setting over $R$ rounds. At the $r$-th round, we assume to have collected a labelled dataset $\mathcal{D}_r = \{(\rvx_n, y_n)\}_{n=1}^{N_r}$. Our goal is to learn an \textit{active flow} matching model with parameters $\phi$ that concentrates probability mass on the superlevel set $\mathcal{S}_\tau = \{\rvx \in \mathcal{X} : y \geq \tau\}$, while maintaining diversity for the next batch proposal $\mathcal{B}_r$. This flow is parameterised by a denoising neural network that captures the model's conditional endpoint distribution $q_t^\phi(\rvx_1 \mid \rvx_t)$, where we drop the conditioning on $t$ for brevity.

In addition, we assume to have access to a \textit{base flow} with parameter $\theta$. This model utilises the same architecture as the active flow above and is parameterised by the reference distribution $q_t^\theta(\rvx_1 \mid \rvx_t)$. It is typically derived from a prior optimisation state, such as the solution from the $(r-1)^{\text{th}}$ round or a lagging snapshot of the current active flow model. This will be used for the reverse-KL objective and the proposal mixture for importance sampling. Lastly, the dataset $\mathcal{D}_r$ is used to train a \textit{classifier} that estimates the probability of high fitness, $p_{\mathcal{D}_r}(y \geq \tau \mid \rvx)$, for a pre-defined value $\tau$, which is used for conditioning the generative model.

\paragraph{Key Insight.}
Standard variational methods (VSD, CbAS) minimise divergences involving the marginal $q_\phi(\rvx)$. Since computing this marginal requires intractable summation over all discrete paths, we lift the objective to the \emph{conditional flow path}. Instead of matching the final distribution at $t=1$, we match the \emph{conditional distributions} averaged over flow time $t \in [0,1]$ and intermediate states $\rvx_t$. This leverages the flow's tractable denoiser $q_t^\phi(\rvx_1 \mid \rvx_t)$, while still providing samples from the desired marginal distribution in the case of forward-KL. We now describe the proposed approaches and the implementation details.


\subsection{Forward-KL Active Flow Matching}

Following CbAS \citep{Brookes2019CbAS}, we minimise the forward-KL between the true conditional endpoint distribution and that of the active flow matching model:
\begin{align}
&\mathcal{L}_{\mathrm{fwd}}(\phi) \nonumber \\
&~= \mathbb{E}_{t, \rvx_t \mid y \geq \tau} 
\left[ 
\mathrm{KL}\!\left( 
p_t(\rvx_1 \mid \rvx_t, y \geq \tau) 
\,\|\, 
q_t^\phi(\rvx_1 \mid \rvx_t) 
\right) 
\right],
\nonumber\\
&~= -\mathbb{E}_{t, \rvx_1 \mid y \geq \tau, \rvx_t \mid \rvx_1}
\left[ \log q_t^\phi(\rvx_1 \mid \rvx_t) \right]
+ C.
\end{align}
Here $C$ is constant w.r.t.\ $\phi$, and where the target posterior is defined using the base flow $\theta$:
\begin{align}
p_t(\rvx_1 \mid \rvx_t, y \geq \tau) 
\propto q_t^\theta(\rvx_1 \mid \rvx_t) \, p_{\mathcal{D}_r}(y \geq \tau \mid \rvx_1).
\nonumber
\end{align}
%
Since we cannot sample from $p(\rvx_1 \mid y \geq \tau)$, we leverage self-normalised importance sampling (SNIS) with a proposal distribution $\mu(\rvx_1)$:
\begin{equation}
\mathcal{L}_{\mathrm{fwd}}(\phi) 
\approx -\mathbb{E}_{t}\!
\left[ 
\frac{
\sum_{k=1}^K w_k \log q_t^\phi(\rvx_{1,k} \mid \rvx_{t,k})
}{
\sum_{k=1}^K w_k
} 
\right],
\label{eq:fwd_snis}
\end{equation}
where $\{\rvx_{1,k}\}_{k=1}^K \sim \mu(\rvx_1)$, weights $w_k = p_{\mathcal{D}_r}(y \geq \tau \mid \rvx_{1,k}) / \mu(\rvx_{1,k})$, $t \sim \mathrm{Unif}[0,1]$, and $\rvx_{t,k} \sim p_t(\rvx_t \mid \rvx_{1,k})$.

We will now show that optimising the above objective will yield samples from the desired marginal distribution.

\begin{theorem}[Consistency of Forward-KL AFM]
\label{thm:fwd_consistency}
Let $p^*(\rvx) \propto p_{\mathrm{prior}}(\rvx) w(\rvx)$ be the target distribution, where $w(\rvx) = p(y \ge \tau \mid \rvx)$ and $p_{\mathrm{prior}}$ is the uniform distribution. Under standard DFM assumptions with masked-source coupling, convex interpolant paths (Eq.~\ref{eq:conditional_path}), and a strictly positive scheduler, the global minimiser $\phi^*$ of the Forward-KL AFM objective (Eq.~\ref{eq:fwd_snis}) yields a terminal distribution $q_1^{\phi^*} = p^*$ almost everywhere.
\end{theorem}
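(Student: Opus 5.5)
The plan is to reduce the statement to two facts. First, in the population limit the SNIS objective (Eq.~\ref{eq:fwd_snis}) is the standard DFM cross-entropy loss (Eq.~\ref{eq:dfm_loss}) with data distribution $p^*$. Second, the minimiser of that loss is the true posterior $p_{1|t}$, and the velocity (Eq.~\ref{eq:velocity}) built from it generates a path whose endpoint at $t=1$ is $p^*$.

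\textbf{Step 1: identify the population objective.} I take $K\to\infty$ and write $\mu$ for the proposal, with $\mathrm{supp}(\mu)\supseteq\mathrm{supp}(p^*)$. By the strong law of large numbers, the numerator of Eq.~\ref{eq:fwd_snis} divided by $K$ converges to $\sum_{\rvx_1} w(\rvx_1)\,\mathbb{E}_{\rvx_t\mid\rvx_1}[\log q_t^\phi(\rvx_1\mid\rvx_t)]$. The denominator divided by $K$ converges to $Z=\sum_{\rvx_1} w(\rvx_1)$. Since $p_{\mathrm{prior}}$ is uniform, $p^*(\rvx_1)=w(\rvx_1)/Z$, so the limit is
\begin{equation*}
\mathcal{L}_\infty(\phi) = -\mathbb{E}_{t\sim U[0,1]}\,\mathbb{E}_{\rvx_1\sim p^*}\,\mathbb{E}_{\rvx_t\sim p_t(\cdot\mid \rvx_0=M,\rvx_1)}\left[\log q_t^\phi(\rvx_1\mid \rvx_t)\right].
\end{equation*}
This is exactly the DFM cross-entropy loss for the coupling $\pi=\delta_M\otimes p^*$. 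I will also note that this is the same objective as the first line of $\mathcal{L}_{\mathrm{fwd}}$: with uniform prior the tilted posterior $q^\theta_t(\rvx_1\mid\rvx_t)\,w(\rvx_1)$, normalised, is the posterior of $p^*$ under the masked path. This holds because the masked path kernel $p_t(\rvx_t\mid\rvx_1)$ does not depend on $\theta$.

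\textbf{Step 2: characterise the global minimiser.} Fix $t$ and condition on $\rvx_t$. The inner expectation is a cross-entropy $-\sum_{\rvx_1} p^*_{1|t}(\rvx_1\mid\rvx_t)\log q_t^\phi(\rvx_1\mid\rvx_t)$, and by Gibbs' inequality it is minimised uniquely when $q^\phi_t(\cdot\mid\rvx_t)=p^*_{1|t}(\cdot\mid\rvx_t)$. Under the "standard assumption" that the network class is expressive enough, the global minimiser $\phi^*$ attains this equality for almost every $t$ and every $\rvx_t$ with positive marginal mass $p_t(\rvx_t)>0$. Strict positivity of $\kappa_t$ on $(0,1]$ is what guarantees that every $\rvx_t$ reachable under $p^*$ has positive probability. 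This is where the "almost everywhere" qualifier enters: the equality holds for a.e.\ $t$, and on states charged by the path. I also need the factorised per-position version, since the denoiser outputs a product over positions. Eq.~\ref{eq:velocity} is applied per position and only requires the marginals $p^*_{1|t}(x_1^i\mid\rvx_t)$. The cross-entropy decomposes into a sum over positions, so each marginal is recovered exactly.

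\textbf{Step 3: endpoint of the flow.} I invoke the result of \citet{Gat2024DFM} summarised in Eq.~\ref{eq:velocity}: the velocity built from the true posterior generates the marginal path $p_t(\rvx)=\sum_{\rvx_1} p_t(\rvx\mid M,\rvx_1)\,p^*(\rvx_1)$, meaning it satisfies the Kolmogorov forward equation with initial condition $\delta_M$. Two points need care. First, a.e.-in-$t$ agreement of the velocities suffices, because the forward equation is an integral equation in $t$. Second, the linear ODE on the finite state space has a unique solution, so the model's path coincides with $p_t$. Evaluating at $t=1$, where $\kappa_1=1$, gives $q_1^{\phi^*}=p^*$.

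I expect the main obstacle to be Step 1. Its difficulties are justifying that the SNIS estimator's minimiser converges to the minimiser of the population objective, which needs the support condition on $\mu$, and the $\theta$-independence argument when the base flow appears in the target posterior. I would handle the first by stating the result at the population level, which is what "global minimiser" means here, and the second by the explicit Bayes computation described at the end of Step 1.
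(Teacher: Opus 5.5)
Your proposal is correct and takes essentially the paper's route: you reduce the objective to the standard DFM cross-entropy on $p^*$ via $p_{\mathrm{prior}}(\rvx_1)\,w(\rvx_1) = Z\,p^*(\rvx_1)$, obtaining it as the $K\to\infty$ limit of the SNIS ratio where the paper writes the weighted population objective directly, and then invoke the posterior-recovery and velocity results of \citet{Gat2024DFM}, which you spell out more carefully (per-position marginals, a.e.\ $t$, uniqueness of the forward equation). One aside, which you cast as resolving the main obstacle, is wrong but not needed: the masked kernel is constant over endpoints consistent with the unmasked positions, so the normalised $q_t^\theta(\rvx_1\mid\rvx_t)\,w(\rvx_1)$ is the posterior of $q_1^\theta w$, not of $p^*$, unless $q_1^\theta$ is uniform; in any case Eq.~\ref{eq:fwd_snis} involves $\theta$ only through $\mu$, which the importance weights already correct for.
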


\begin{proof}[Proof Sketch]
We reduce the importance-weighted training to standard DFM on $p^*$. The key identity is:
\begin{equation}
\mathbb{E}_{\rvx \sim p_{\mathrm{prior}}}\bigl[w(\rvx) \mathcal{L}_{\mathrm{CE}}(\rvx; \phi)\bigr] = Z \cdot \mathbb{E}_{\rvx \sim p^*}[\mathcal{L}_{\mathrm{CE}}(\rvx; \phi)]
\end{equation}
where $Z$ is the normalisation and $\mathcal{L}_{\mathrm{CE}}$ is in Eq.\ \ref{eq:dfm_loss}. Since $Z$ is independent of $\phi$, minimising the weighted objective over the prior is equivalent to minimising the standard DFM loss over the target. Consistency follows from \citet{Gat2024DFM}: the minimiser recovers the true posterior under $p^*$, which defines a velocity field that generates the correct marginal path. Full proof in Appendix~\ref{app:proofs}.
\end{proof}

\begin{remark}
\label{rem:importance_sampling}
This result highlights that AFM requires no new flow dynamics theory. The weights implicitly reweight the training data from the prior to $p^*$. In practice (Algorithm~\ref{alg:afm_fwd}), we estimate this objective using samples from a mixture proposal $\mu(\rvx)$ (Eq.\ \ref{eq:mixture}) rather than just the prior. By correcting these samples with importance weights, we obtain consistent gradient estimates for the target objective without ever requiring samples from $p^*$ itself.
\end{remark}


\subsection{Reverse-KL Active Flow Matching}

Following VSD \citep{Steinberg2025VSD}, we minimise:
\begin{align}
&\mathcal{L}_{\mathrm{rev}}(\phi) \nonumber \\
&~= \mathbb{E}_{t, \rvx_t \mid y \geq \tau} 
\left[ 
\mathrm{KL}\!\left( 
q_t^\phi(\rvx_1 \mid \rvx_t) 
\,\|\, 
p_t(\rvx_1 \mid \rvx_t, y \geq \tau) 
\right) 
\right].
\nonumber\\ 
&~= \mathbb{E}_{t, \rvx_t \mid y \geq \tau, \rvx_1 \sim q_t^\phi(\cdot \mid \rvx_t)} 
\Big[ g(\rvx_1, \rvx_t)
\Big],
\end{align}
where $g(\rvx_1, \rvx_t) = \log q_t^\phi(\rvx_1 \mid \rvx_t) - \log q_t^\theta(\rvx_1 \mid \rvx_t) 
- \log p_{\mathcal{D}_r}(y \geq \tau \mid \rvx_1)$.
To approximate the expectation involving $\rvx_t \mid y \geq \tau$, we can use SNIS as above to yield,
\begin{align}
\mathcal{L}_{\mathrm{rev}}(\phi) 
&\approx \mathbb{E}_{t}\! \left[
\sum_{k=1}^K \widetilde{w}_k \, 
\mathbb{E}_{\rvx_1 \sim q_t^\phi(\cdot \mid \rvx_{t,k})} 
\big[ 
g(\rvx_1, \rvx_{t,k})
\big]\right],
\label{eq:rev_snis}
\end{align}
where $\{\rvx_{1,k}\}_{k=1}^K \sim \mu(\rvx_1)$, $w_k = p_{\mathcal{D}_r}(y \geq \tau \mid \rvx_{1,k}) / \mu(\rvx_{1,k})$, and $\widetilde{w}_k = w_k / \sum_{j} w_j$.
Unfortunately, unlike forward-KL AFM, we have not yet managed to prove that the reverse-KL AFM yields a consistent estimate of the desired marginal distribution.

\begin{algorithm}[t]
\caption{$r$-th round of Active Flow Matching (AFM)\label{algo:afm}}
\label{alg:afm}
\label{alg:afm_fwd}
\label{alg:afm_rev}
\begin{algorithmic}[1]
\STATE \textbf{Input:} Data $\mathcal{D}_r$, threshold $\tau$, proposal $\mu$, batch size $K$
\STATE \textbf{Input:} Active flow $q_\phi$, classifier $p_\mathcal{D}(y \geq \tau \mid \rvx)$, base flow $p_\theta$, round $r$
\STATE \textbf{Input:} Objective $\in \{\textsc{Fwd-KL}, \textsc{Rev-KL}, \textsc{Sym-KL}\}$
\FOR{each gradient step}
    \STATE Sample $\{\rvx_{1,k}\}_{k=1}^K \sim \mu(\rvx_1)$
    \STATE Compute $w_k = p_\mathcal{D}(y \geq \tau \mid \rvx_{1,k}) / \mu(\rvx_{1,k})$
    \STATE Normalise $\widetilde{w}_k = w_k / \sum_j w_j$
    \STATE Sample $t \sim \mathrm{Unif}[0,1]$
    \STATE Sample $\rvx_{t,k} \sim p_t(\cdot \mid \rvx_{1,k})$ for all $k$ (see Eq.~\ref{eq:conditional_path})
    \IF{\textsc{Fwd-KL}}
        \STATE $\mathcal{L} \gets$ Eq.~\ref{eq:fwd_snis}
    \ELSE
        \STATE Sample $\rvx_{1,k}' \sim q_\phi(\cdot \mid \rvx_{t,k}, t)$ for all $k$
        \STATE $\mathcal{L} \gets$ Eq.~\ref{eq:rev_snis} \textbf{if} \textsc{Rev-KL} \textbf{else} Eq.~\ref{eq:sym_kl_obj}
    \ENDIF
    \STATE Update $\phi \gets \phi - \eta \nabla_\phi \mathcal{L}$
\ENDFOR
\STATE \textbf{Output:} Updated flow $q_\phi$
\end{algorithmic}
\end{algorithm}

\subsection{Symmetric-KL Active Flow Matching}

We combine both objectives by minimising the symmetric KL divergence:
\begin{equation}
\mathcal{L}_{\mathrm{sym}}(\phi) = \mathcal{L}_{\mathrm{fwd}}(\phi) + \mathcal{L}_{\mathrm{rev}}(\phi),
\label{eq:sym_kl_obj}
\end{equation}
where $\mathcal{L}_{\mathrm{fwd}}$ and $\mathcal{L}_{\mathrm{rev}}$ are defined in Eqs.~\ref{eq:fwd_snis} and~\ref{eq:rev_snis} respectively. The symmetric KL inherits mode-covering behaviour from the forward term and mode-seeking behaviour from the reverse term. The implementation of the proposed approaches is summarised in \cref{alg:afm}.



\subsection{Proposal Distribution Design}

A critical component of both forward- and reverse-KL AFM is the choice of proposal distribution $\mu(\rvx_1)$ for importance sampling. A good proposal must balance three competing objectives: (i) broad coverage over the sequence space to avoid missing promising regions, (ii) concentration on high-fitness designs to reduce estimator variance, and (iii) computational tractability. We introduce a three-component mixture that addresses each objective.

\paragraph{Mixture definition.}
At active generation round $r$, we define the proposal as:
\begin{equation}
\mu(\rvx_1) = \alpha_0 \, p_0(\rvx_1) 
       + \alpha_{\mathrm{flow}} \, q_1^\theta(\rvx_1) 
       + \alpha_{\mathrm{rbuff}} \sum_{j=1}^{J} \pi_j \, \delta_{x^{(j)}}(\rvx_1),
\label{eq:mixture}
\end{equation}
where $\alpha_0$, $\alpha_{\mathrm{flow}}$, and $\alpha_{\mathrm{rbuff}}$ are non-negative mixing coefficients subject to the constraint $\alpha_0 + \alpha_{\mathrm{flow}} + \alpha_{\mathrm{rbuff}} = 1$, $p_0$ is the uniform prior over sequences, $q_1^\theta$ is the marginal endpoint distribution of the base flow $\theta$ (from the previous round), and $\{\rvx^{(j)}\}_{j=1}^J$ is a replay buffer of high-scoring sequences with normalised weights:
\begin{equation}
\pi_j = \frac{\exp(\gamma \, y^{(j)})}{\sum_{j'=1}^J \exp(\gamma \, y^{(j')})},
\label{eq:buffer_weights}
\end{equation}
where $y^{(j)}$ denotes the observed fitness score and $\gamma > 0$ is a temperature parameter controlling concentration on top sequences.

\paragraph{Prior component: weight simplification.}
When the proposal is the prior, $\mu(\rvx_1) = p_0(\rvx_1)$, the importance weights simplify considerably. The unnormalised posterior is:
\begin{equation}
\tilde{p}(\rvx_1 \mid y \geq \tau) = p(y \geq \tau \mid \rvx_1) \cdot p_0(\rvx_1),
\end{equation}
where $p(y \geq \tau \mid \rvx_1)$ is the classifier likelihood and $p_0(\rvx_1)$ is the prior. For self-normalised importance sampling, we require unnormalised weights:
\begin{align}
w(x_1) &= \frac{\tilde{p}(\rvx_1 \mid y \geq \tau)}{\mu(\rvx_1)} 
= p(y \geq \tau \mid \rvx_1).
\end{align}
That is, the prior \emph{cancels exactly} in the importance weight ratio, regardless of whether $p_0(\rvx_1)$ is uniform or not, and the importance weight reduces to the classifier output. This eliminates the need to evaluate the prior density $p_0(\rvx_1)$, and is similar to the strategy in \cite{brookes2018design}.

\paragraph{Flow component: Monte Carlo integration.}
Sampling from the previous update's flow concentrates proposals near regions the model has already identified as promising, enabling \emph{local refinement}. However, unlike the prior and replay buffer, the flow's marginal density $q_1^\theta(\rvx_1)$ is not available in closed form. We therefore estimate it via Monte Carlo integration.

Recall that the base flow $\theta$ defines a stochastic process from source $p_0$ to data via the learned conditional $p_\theta(\rvx_1 \mid \rvx_0, t=1)$. The marginal density at the endpoint is:
\begin{equation}
q_1^\theta(\rvx_1) = \int p_0(\rvx_0) \, p_\theta(\rvx_1 \mid \rvx_0, t=1) \, d\rvx_0,
\label{eq:flow_marginal}
\end{equation}
where $\theta$ denotes the flow parameters from the previous update. Since the discrete state space makes this integral intractable, we approximate it using $N$ Monte Carlo samples $\{\rvx_0^{(n)}\}_{n=1}^N \sim p_0$:
\begin{equation}
q_1^\theta(\rvx_1) \approx \frac{1}{n} \sum_{n=1}^N p_\theta(\rvx_1 \mid \rvx_0^{(n)}, t=1).
\label{eq:flow_mc}
\end{equation}
This Monte Carlo estimator introduces variance, but in practice, we find this approximation sufficiently accurate for effective training.

\paragraph{Replay buffer component: unity weights from empirical observations.}
The replay buffer $\{\rvx_1^{(j)}\}_{j=1}^J$ stores sequences from previous rounds where we have \emph{directly observed} $f(\rvx_1^{(j)}) > \tau$, thus confirming they satisfy the constraint. When sampling a batch entirely from the replay buffer with proposal:
%
$\mu(\rvx_1) = \sum_{j=1}^J \pi_j \delta_{\rvx_1^{(j)}}(\rvx_1)$,
%
the unnormalised importance weight becomes:
\begin{align}
w(\rvx_1^{(j)}) &= \frac{p(y \geq \tau \mid \rvx_1^{(j)}) \cdot p_0(\rvx_1^{(j)})}{\pi_j} 
= \frac{p_0(\rvx_1^{(j)})}{\pi_j}.
\end{align}
If all buffer sequences have uniform prior probability, or if we set $\pi_j \propto p_0(\rvx_1^{(j)})$, then all weights are approximately equal. After SNIS normalisation, this yields $\widetilde{w}_k = 1/K$, giving \emph{zero variance from reweighting}. In practice, we use fitness-weighted sampling $\pi_j \propto \exp(\gamma y^{(j)})$ to concentrate on the highest-scoring observed sequences while maintaining low-variance gradient estimates.

\paragraph{Practical implementation.}
While Equation~\ref{eq:mixture} defines our proposal density, evaluating the full mixture $\mu(\rvx_1)$ for every sample requires computing all component densities, which is expensive. Instead, at each iteration we select a single component $c \sim \text{Categorical}(\alpha_0, \alpha_{\mathrm{flow}}, \alpha_{\mathrm{rbuff}})$ and draw the full batch from that component. Importance weights are then computed with respect to the selected component density only: $w(\rvx_1) = p_{\mathcal{D}_r}(y \geq \tau \mid \rvx_1)$ for the prior, $w(\rvx_1) = p_{\mathcal{D}_r}(y \geq \tau \mid \rvx_1) / q_1^\theta(\rvx_1)$ for the flow, and $w(\rvx_1) = 1$ for replay. This per-component scheme remains consistent while significantly reducing computational overhead.

\begin{figure*}[t]
    \centering
    \begin{subfigure}[t]{0.32\textwidth}
        \centering
        \includegraphics[width=\linewidth]{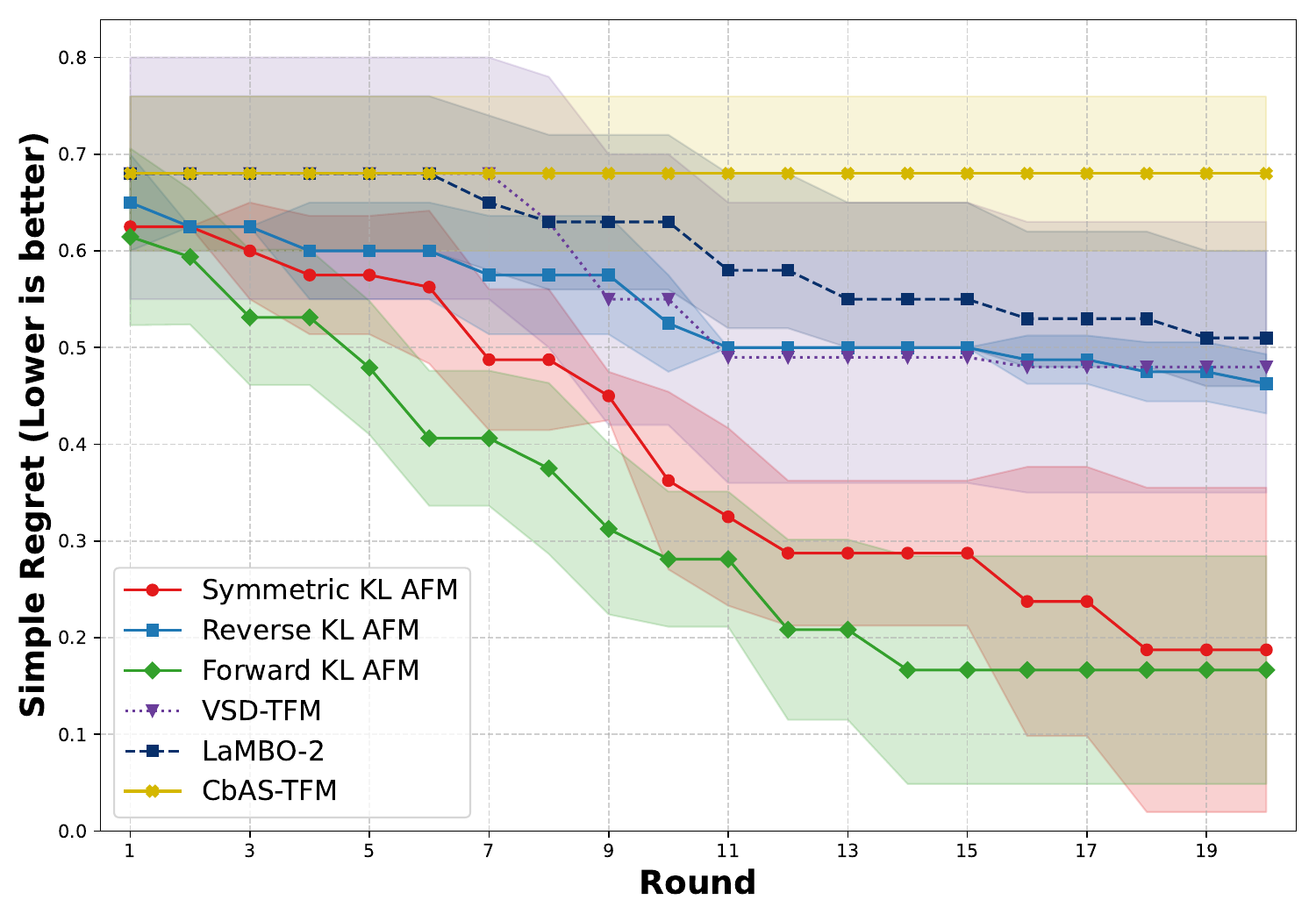}
        \caption{Ehrlich landscape, $L = 32$.}
        \label{fig:ehrlich_L32}
    \end{subfigure}
    \hfill
    \begin{subfigure}[t]{0.32\textwidth}
        \centering
        \includegraphics[width=\linewidth]{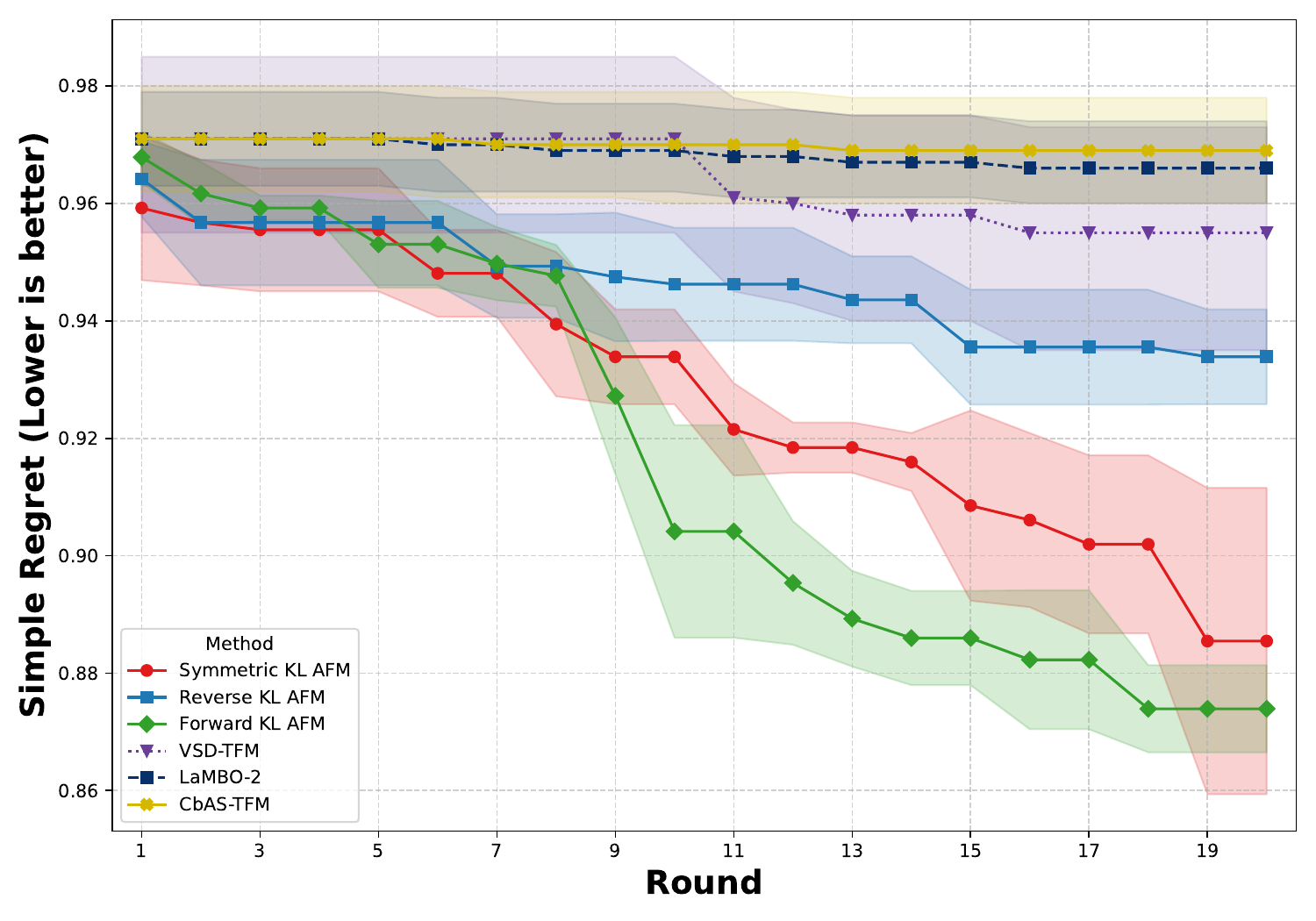}
        \caption{Ehrlich landscape, $L = 64$.}
        \label{fig:ehrlich_L64}
    \end{subfigure}
    \hfill
    \begin{subfigure}[t]{0.32\textwidth}
        \centering
        \includegraphics[width=\linewidth]{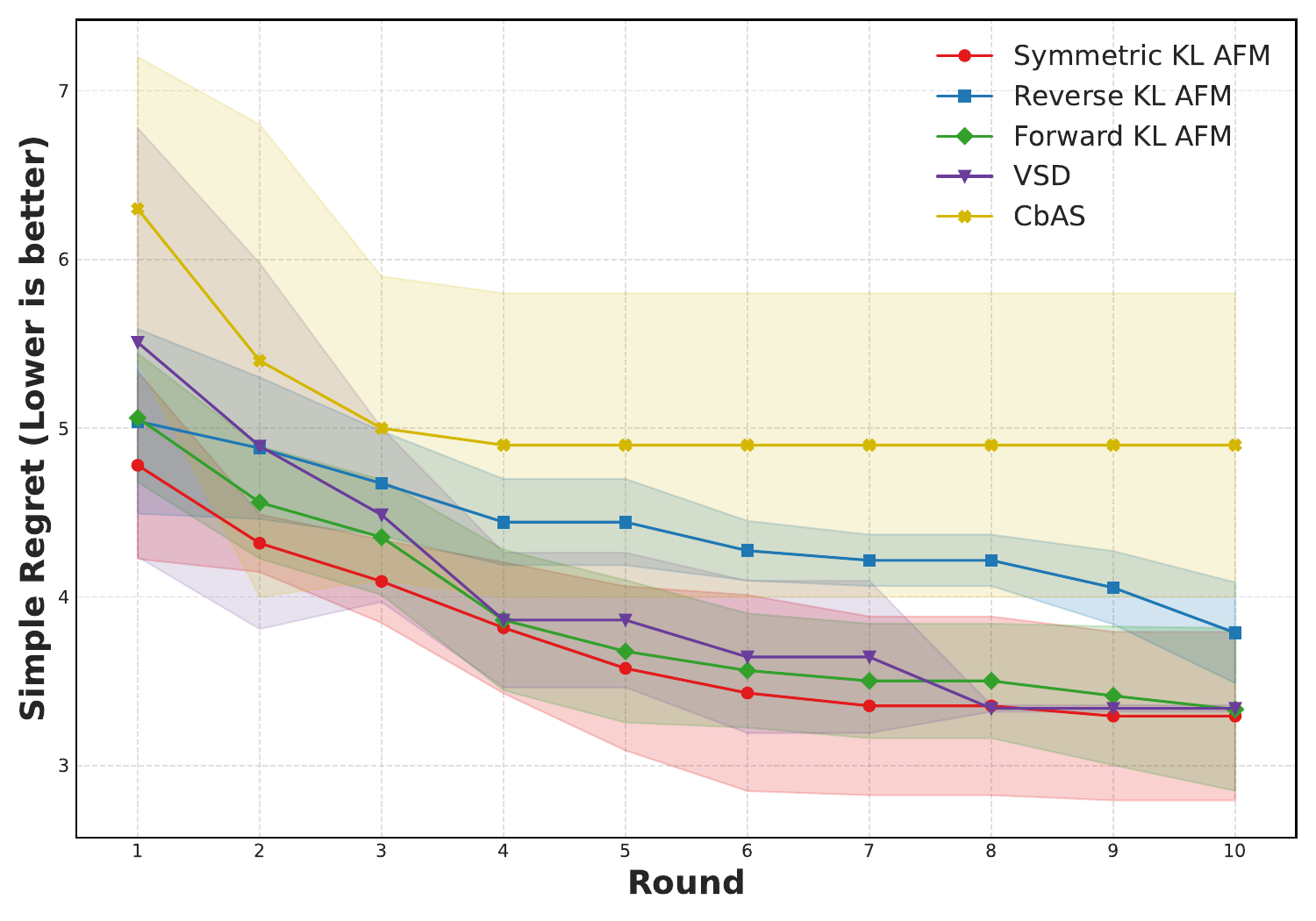}
        \caption{AAV landscape ($L=28$).}
        \label{fig:aav}
    \end{subfigure}
    \caption{
        Budget-constrained optimisation on Ehrlich synthetic landscapes and the AAV capsid design task.
        We report the simple regret
    }
    \label{fig:landscape_experiments}
\end{figure*}

\section{Related Work}

Autoregressive models such as ProGen2~\citep{Nijkamp2023ProGen2} and Tranception~\citep{Notin2022Tranception} achieve strong fitness prediction but their left-to-right factorisation limits epistatic coupling and flexible conditioning. Discrete diffusion addresses this via parallel denoising: D3PMs~\citep{Austin2021D3PM} generalise diffusion to finite state spaces, MDLM~\citep{Sahoo2024MDLM} connects absorbing-state diffusion to masked language modelling, and EvoDiff~\citep{Alamdari2023EvoDiff} enables structure-free motif-conditioned protein generation. Discrete flow matching~\citep{Gat2024DFM,Campbell2024Generative} learns continuous probability flows over discrete spaces via conditional endpoint distributions $q_\phi(\rvx_1 \mid \rvx_t, t)$. Crucially, all parallel generative models are \emph{implicit}: exact marginal evaluation requires summing over exponentially many paths~\citep{Austin2021D3PM}, motivating our reformulation in terms of tractable conditionals.

Active generation methods trains a surrogate $f_\psi(\rvx)$ on measurements and searches for high-scoring designs~\citep{Yang2019MLDE}. Two variational frameworks target $p(\rvx \mid y \geq \tau)$ directly: CbAS~\citep{Brookes2019CbAS} minimises forward-KL via importance-weighted retraining; VSD~\citep{Steinberg2025VSD} minimises reverse-KL using score-function gradients. Both require tractable $q_\phi(\rvx)$---for density ratios (CbAS) or score gradients (VSD)---which implicit flow models lack. We extend these principles by reformulating objectives over $q_t^\phi(\rvx_1 \mid \rvx_t)$. LaMBO-2 \cite{Gruver2023LaMBO2} instead uses expected improvement (EI) or expected hypervolume improvement (EHVI) acquisition functions as a guide for a diffusion generative backbone in an active generation setting.

Classifier~\citep{Dhariwal2021Classifier} and classifier-free~\citep{Ho2022ClassifierFree} guidance steer continuous diffusion via score perturbation. Discrete extensions require relaxations: Gumbel-Softmax~\citep{Jang2017GumbelSoftmax} enables differentiable sampling with biased gradients; straight-through estimators~\citep{Bengio2013StraightThrough} pass gradients through discrete operations; CTMC-based guidance~\citep{Nisonoff2024DiscreteGuidance} exploits single-token transitions. AFM differs fundamentally: we incorporate fitness into the \emph{training objective} over conditional endpoints, yielding exact discrete samples without relaxations or sampling modifications.



\section{Experiments}
\label{sec:experiments}

We evaluate AFM on five protein design and one small molecule task spanning a range of sequence lengths, fitness landscapes, and oracle characteristics. Our experiments are designed to answer two questions: (i) Can implicit generative models be steered towards high-fitness regions without access to tractable marginal likelihoods?
(ii) How do the forward-KL, reverse-KL and symmetric KL variants of AFM compare in performance?

\subsection{Tasks and Metrics}

\paragraph{Ehrlich synthetic landscapes \citep{Stanton2024Ehrlich}.}
Ehrlich functions are closed-form, procedurally generated test functions over discrete sequences that mimic key geometric properties of biophysical sequence optimisation problems, including ruggedness, constraints, and epistasis. Each instance defines a score $f(\rvx) \in [0, 1]$ for valid sequences by aggregating the satisfaction of motif-like constraints; invalid sequences receive $f(\rvx) = -1$. Each instance admits a known global optimum $\rvx^\star$ with $f(\rvx^\star) = 1.0$. We instantiate Ehrlich functions with fixed lengths $L \in \{32, 64\}$. We pre-train on a pool of $5{,}000$ unlabelled valid sequences to learn the feasible region of sequence space. We perform optimisation for $R=20$ rounds with a batch size of $B=128$. These tasks allow us to compare methods in a controlled setting where the global optimum is known.

\paragraph{Adeno-Associated Virus (AAV) \citep{Bryant2021AAV}.}
We also evaluate performance on the AAV capsid design task, following the black-box optimisation setting of \citet{Steinberg2025VSD}. The goal is to maximise the fitness of a 28-amino acid sequence ($20^{28}$ search space) corresponding to a region of the AAV capsid protein involved in viral packaging. We use a convolutional neural network (CNN) oracle provided by \citet{Kirjner2024GGS} as the ground-truth fitness function. The known global maximum of $y^* \approx 19.54$ allows for regret measurement. We perform optimisation for $R=10$ rounds with a batch size of $B=128$.

\paragraph{Structure-based protein design.}
We consider a structure-based protein optimisation task with two objectives: (i) improving thermodynamic stability ($-\Delta G$) and (ii) increasing solvent-accessible surface area (SASA). The black-box oracle is the FoldX molecular simulation software \citep{Schymkowitz2005FoldXWeb}, wrapped by the \textsc{poli} library \citep{gonzalez2024poli}. We use the mRouge red fluorescent protein ($L = 228$) as the base protein. We perform optimisation for $R=10$ rounds with a batch size of $B=8$ and a constraint of up to three mutations per round. 

 \paragraph{Molecular Docking (F2/Thrombin).}
We evaluate on molecular docking targeting Thrombin (F2) using the Dockstring  benchmark~\citep{garciaortegon2022dockstring}, with molecules represented as SELFIES~\citep{krenn2020selfies}. We optimise $f(\rvx) = s_{\text{dock}}(\rvx) - 7.5(1 - \text{QED}(\rvx))$, combining docking score with a drug-likeness penalty~\citep{bickerton2012quantifying}. Starting from ${\sim}260$k molecules with pre-computed scores, each method proposes $B=32$ molecules per round for $R=100$ rounds (3,200 oracle calls). We compare against VSD; We failed to stabilise CbAS on this task.

\paragraph{Evaluation metrics.}
All results are averaged over five independent runs (three for Dockstring). For tasks with known global optimum (Ehrlich, AAV), we report \emph{simple regret}: $\text{reg}_r = f(\rvx^\star) - \max_{s \leq r} \max_{\rvx \in \mathcal{B}_s} f(\rvx)$ (lower is better). For tasks without unknown optima (FoldX, Dockstring), we report the \emph{best score found}: $I_r = \max_{s \leq r} \max_{x \in \mathcal{B}_s} f(\rvx)$ (higher is better).

\begin{figure*}[t]
    \centering
    \begin{subfigure}[t]{0.32\textwidth}
        \centering
        \includegraphics[width=\linewidth]{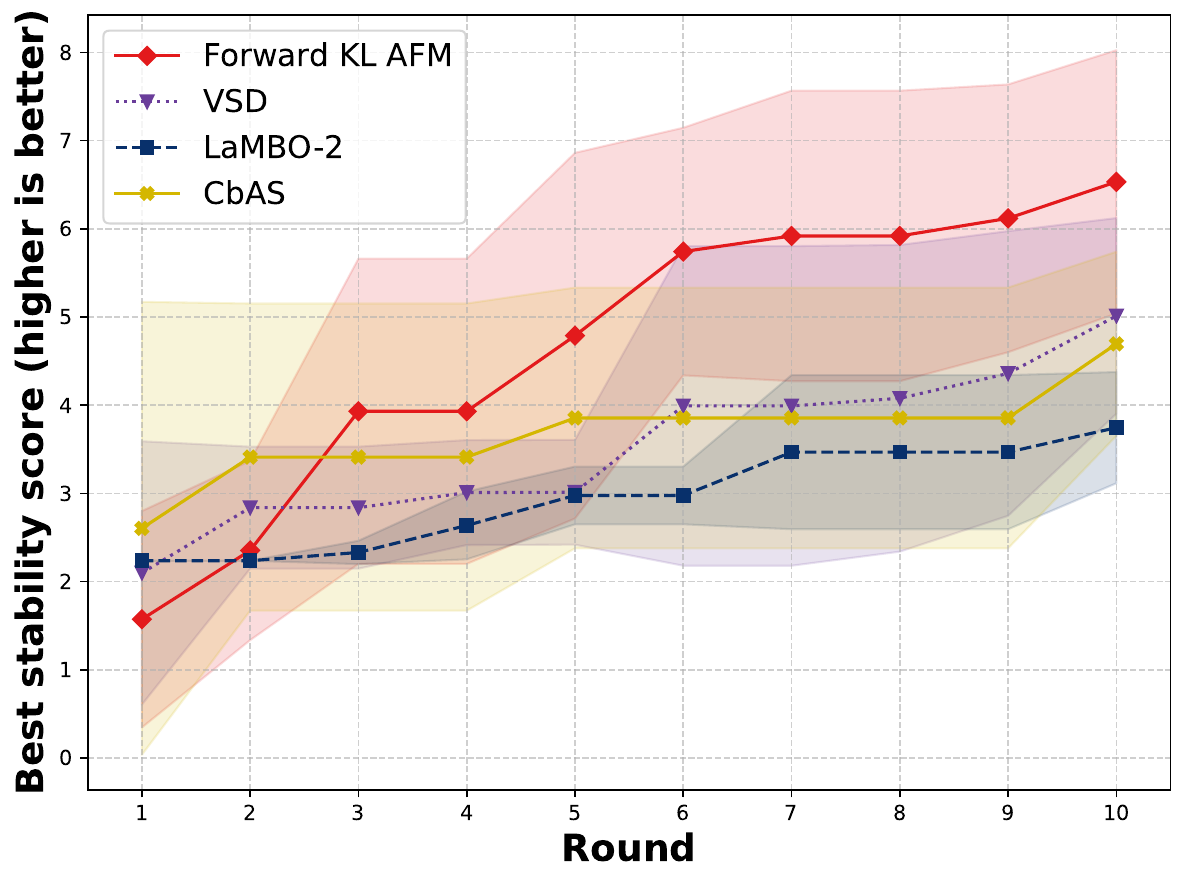}
        \caption{FoldX stability.}
        \label{fig:foldx_stability}
    \end{subfigure}
    \hfill
    \begin{subfigure}[t]{0.32\textwidth}
        \centering
        \includegraphics[width=\linewidth]{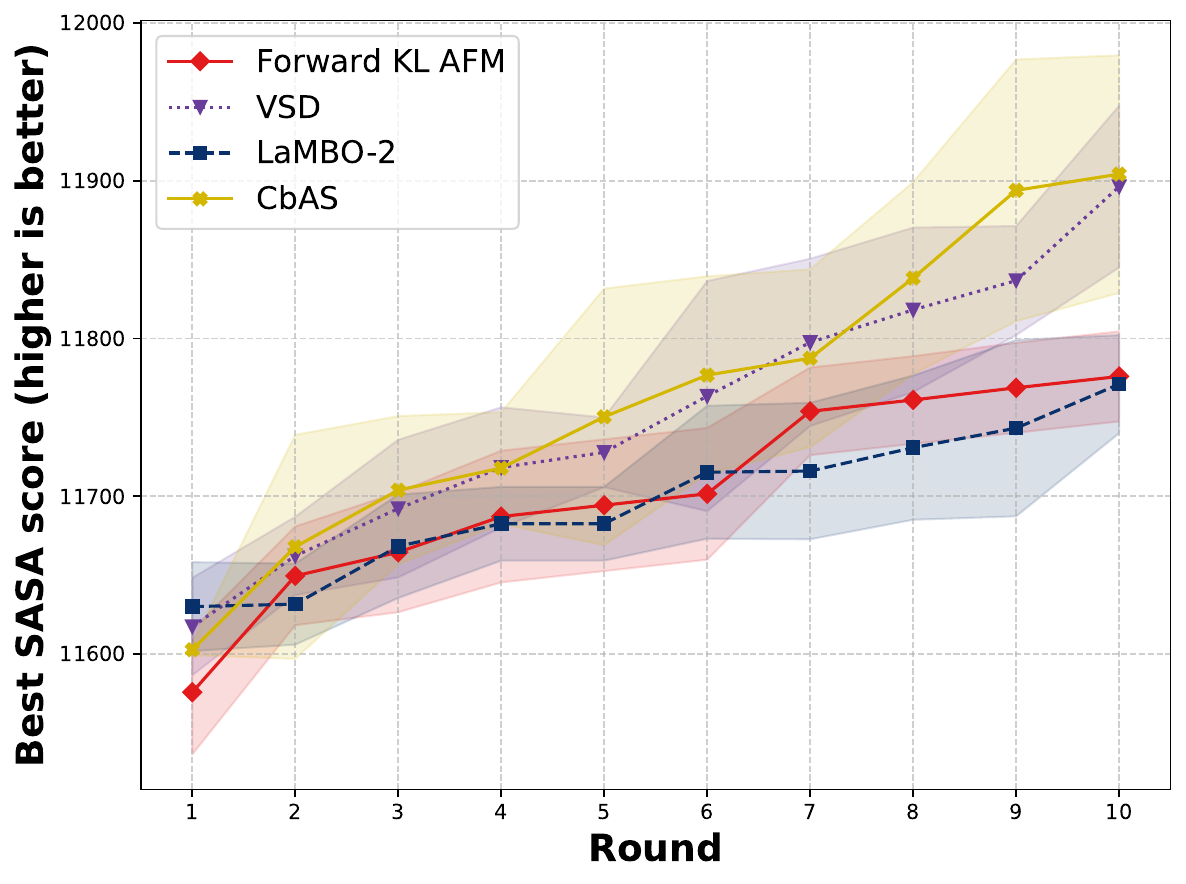}
        \caption{FoldX SASA.}
        \label{fig:foldx_sasa}
    \end{subfigure}
    \hfill
    \begin{subfigure}[t]{0.32\textwidth}
        \centering
        \includegraphics[width=\linewidth]{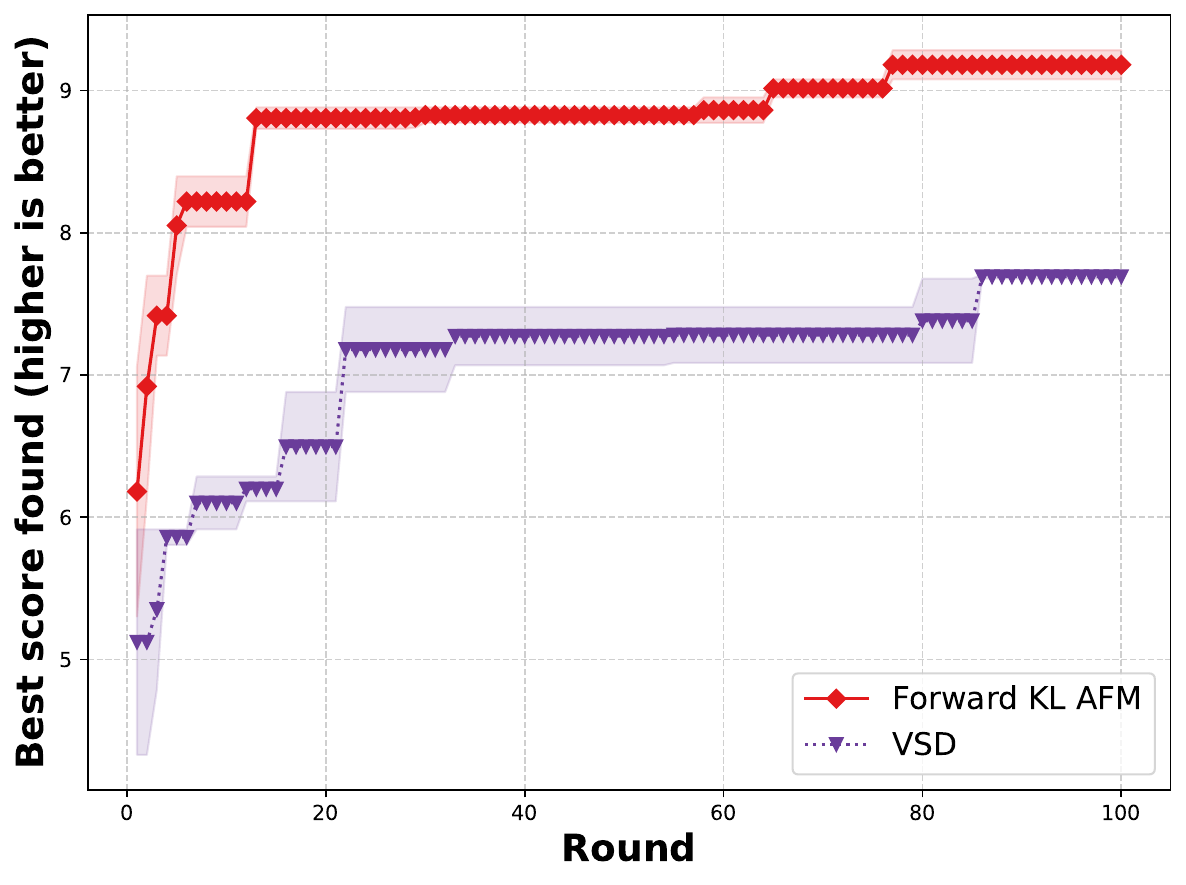}
        \caption{F2 Docking.}
        \label{fig:f2_docking}
    \end{subfigure}
    \caption{
        Structure-based protein design with FoldX oracles.
        We plot the best score found vs.\ oracle calls;
    }
    \label{fig:foldx_experiments}
\end{figure*}

\subsection{Baselines and Implementation}

We compare three AFM variants (forward-KL, reverse-KL, symmetric-KL) against three baselines: VSD \citep{Steinberg2025VSD}, CbAS \citep{Brookes2019CbAS}, and LaMBO-2 \citep{Gruver2023LaMBO2}. VSD minimises the reverse-KL divergence based objective, while CbAS minimises the forward-KL; both use autoregressive models with tractable likelihoods. LaMBO-2 uses discrete diffusion with gradient-based guidance in the latent space of the denoising network to guide generation toward high-fitness regions. For VSD and CbAS, we use the public implementation of \citet{Steinberg2025VSD}\footnote{\url{https://github.com/csiro-funml/variationalsearch}} with transformer backbones, which they showed to outperform LSTM counterparts. For LaMBO-2, we use the official implementation provided in the \textsc{poli} library.\footnote{\url{https://github.com/prescient-design/cortex}} For the mutation-constrained setting, LaMBO-2 provides built-in functionality to control the number of edits during generation. For VSD and CbAS, we use a masked-transformer (mTFM) backbone following \citep{Steinberg2025AGPS}, which naturally supports edit-constrained generation. The hyperparameter choices for AFM 
are provided in Appendix~\ref{app:implementation}.




\subsection{Results}

On both Ehrlich-32 and Ehrlich-64, forward-KL AFM converges fastest to near-optimal solutions, with symmetric-KL AFM following closely behind, while reverse-KL AFM lags behind both variants. VSD exhibits slower convergence, with the gap widening on longer sequences (Ehrlich-64), suggesting difficulty capturing long-range epistatic interactions. LaMBO-2 also struggles with slower convergence across both sequence lengths. CbAS shows strong exploitative behavior, failing to explore beyond initially discovered sequences and converging prematurely to suboptimal regions. On the AAV capsid design task, both forward-KL and symmetric-KL AFM perform strongly, achieving the lowest regret, with VSD following closely behind. Reverse-KL AFM struggles on this task, exhibiting poorer convergence in comparison.
Based on the relatively weaker performance of reverse-KL and symmetric-KL AFM on the Ehrlich and AAV tasks, we focus on forward-KL AFM for the remaining experiments. On the stability optimisation task, forward-KL AFM performs best, discovering high-stability protein variants more quickly than other methods. However, on the SASA optimisation task, forward-KL AFM struggles and underperforms both VSD and CbAS.
On the F2 molecular docking task, forward-KL AFM strongly outperforms VSD, achieving substantially higher docking scores throughout optimisation.

\section{Limitations}

Active Flow Matching relies on accurate estimates of the high-fitness probability $p(y \ge \tau \mid \rvx)$, and its performance is therefore sensitive to classifier quality, particularly in early rounds when labelled data are sparse. While self-normalised importance sampling yields consistent estimators asymptotically, finite-sample variance can be high when the proposal distribution poorly overlaps with the target, necessitating careful proposal design and mixing. In addition, our reverse-KL formulation lacks the same theoretical consistency guarantees as the forward-KL variant, and empirically exhibits mode-seeking behaviour that can lead to premature convergence.

\section{Conclusion}

We introduced Active Flow Matching, a principled framework for integrating implicit discrete flow models with variational active generation objectives. By reformulating KL-based optimisation criteria to operate on tractable conditional endpoint distributions along the flow, AFM enables fine-tuning of discrete flows toward high-fitness regions without requiring access to intractable marginal likelihoods. Our forward-KL formulation admits a consistency guarantee and empirically delivers strong exploration–exploitation trade-offs, outperforming state-of-the-art baselines across diverse protein design benchmarks under tight evaluation budgets.

More broadly, AFM bridges a gap between modern non-autoregressive generative models and probabilistically grounded black-box optimisation frameworks such as VSD and CbAS. 
This provides an additional approach to attack the problem of active generation, as specified in~\cite{Steinberg2025VSD}.
This connection opens several promising directions, including tighter variance reduction strategies, extensions to multi-objective and constrained optimisation, and applications beyond biological sequence design. We view Active Flow Matching as a step toward unifying expressive implicit generative models with rigorous decision-theoretic objectives for black-box optimisation in discrete, high-dimensional spaces.

\bibliography{references}

@article{Starr2016EpistasisProtein,
  author    = {Tyler N. Starr and Joseph W. Thornton},
  title     = {Epistasis in protein evolution},
  journal   = {Protein Science},
  volume    = {25},
  number    = {7},
  pages     = {1204--1218},
  year      = {2016},
}

@article{Phillips2008Epistasis,
  author    = {Patrick C. Phillips},
  title     = {Epistasis --- the essential role of gene interactions in the structure and evolution of genetic systems},
  journal   = {Nature Reviews Genetics},
  volume    = {9},
  number    = {11},
  pages     = {855--867},
  year      = {2008},
}

@article{Austin2021D3PM,
  author    = {Jacob Austin and Daniel D. Johnson and Jonathan Ho and Daniel Tarlow and Rianne van den Berg},
  title     = {Structured Denoising Diffusion Models in Discrete State Spaces},
  journal   = {arXiv},
  eprint    = {2107.03006},
  primaryClass = {cs.LG},
  year      = {2021},
}

@inproceedings{Gat2024DFM,
  author    = {Itai Gat and Tal Remez and Neta Shaul and Felix Kreuk and Ricky T. Q. Chen and Gabriel Synnaeve and Yossi Adi and Yaron Lipman},
  title     = {Discrete Flow Matching},
  booktitle = {Advances in Neural Information Processing Systems},
  year      = {2024},
}

@article{Watson2023RFdiffusion,
  author    = {Joseph L. Watson and David Juergens and Nathaniel R. Bennett and Brian L. Trippe and Jason Yim and Helen E. Eisenach and Woody Ahern and Andrew J. Borst and Robert J. Ragotte and Lukas F. Milles and Basile I. M. Wicky and Nikita Hanikel and Samuel J. Pellock and Alexis Courbet and William Sheffler and Jue Wang and Preetham Venkatesh and Isaac Sappington and Susana V{\'a}zquez Torres and Anna Lauko and Valentin De Bortoli and Emile Mathieu and Sergey Ovchinnikov and Regina Barzilay and Tommi S. Jaakkola and Frank DiMaio and Minkyung Baek and David Baker},
  title     = {De novo design of protein structure and function with {RFdiffusion}},
  journal   = {Nature},
  volume    = {620},
  number    = {7976},
  pages     = {1089--1100},
  year      = {2023},
}

@article{Ingraham2023Chroma,
  author    = {John B. Ingraham and Max Baranov and Zak Costello and Karl W. Barber and Wujie Wang and Ahmed Ismail and Vincent Frappier and Dana M. Lord and Christopher Ng-Thow-Hing and Erik R. Van Vlack and Shan Tie and Vincent Xue and Sarah C. Cowles and Alan Leung and Jo{\~a}o V. Rodrigues and Claudio L. Morales-Perez and Alex M. Ayoub and Robin Green and Katherine Puentes and Frank Oplinger and Nishant V. Panwar and Fritz Obermeyer and Adam R. Root and Andrew L. Beam and Frank J. Poelwijk and Gevorg Grigoryan},
  title     = {Illuminating protein space with a programmable generative model},
  journal   = {Nature},
  volume    = {623},
  number    = {7988},
  pages     = {1070--1078},
  year      = {2023},
}

@inproceedings{Steinberg2025VSD,
  author = {Daniel M. Steinberg and Rafael Oliveira and Cheng Soon Ong and Edwin V. Bonilla},
  title = {Variational Search Distributions},
  year = {2025},
  booktitle = {International Conference on Learning Representations},
}

@misc{Steinberg2025AGPS,
  author = {Daniel M. Steinberg and Asiri Wijesinghe and Rafael Oliveira and Piotr Koniusz and Cheng Soon Ong and Edwin V. Bonilla},
  title = {Amortized Active Generation of Pareto Sets},
  year = {2025},
  howpublished = {arXiv:2510.21052},
}

@misc{ScrippsSPRFees,
  author = {{Scripps Research} Biophysics and Biochemistry Core},
  title = {Surface Plasmon Resonance ({SPR}) Fees},
  year = {2024},
}

@misc{DukeBIAFees,
  author = {{Duke University} Biomolecular Interaction Analysis Core},
  title = {Reservations, Policies, and Rates},
  year = {2024},
  note = {SPR/BLI hourly rates; sample prep/data analysis fees},
}

@inproceedings{Brookes2019CbAS,
  author    = {David H. Brookes and Hahnbeom Park and Jennifer Listgarten},
  title     = {Conditioning by adaptive sampling for robust design},
  booktitle = {International Conference on Machine Learning},
  pages     = {773--782},
  year      = {2019},
}

@article{Lipman2022FlowMatching,
  author    = {Yaron Lipman and Ricky T. Q. Chen and Heli Ben-Hamu and Maximilian Nickel and Matt Le},
  title     = {Flow Matching for Generative Modeling},
  journal   = {arXiv},
  eprint    = {2210.02747},
  year      = {2022},
}

@inproceedings{Gruver2023LaMBO2,
  author    = {Nate Gruver and Samuel Stanton and Nathan C. Frey and Tim G. J. Rudner and Isidro Hotzel and Julien Lafrance-Vanasse and Arvind Rajpal and Kyunghyun Cho and Andrew Gordon Wilson},
  title     = {Protein Design with Guided Discrete Diffusion},
  booktitle = {Advances in Neural Information Processing Systems},
  year      = {2023},
}

@misc{Stanton2024Ehrlich,
  title  = {Closed-Form Test Functions for Biophysical Sequence Optimization Algorithms},
  author = {Samuel Stanton and Robert Alberstein and Nathan Frey and Andrew Watkins and Kyunghyun Cho},
  year   = {2024},
  eprint = {2407.00236},
  archivePrefix = {arXiv},
  primaryClass  = {cs.LG},
  doi    = {10.48550/arXiv.2407.00236}
}

@article{Bryant2021AAV,
  title   = {Deep Diversification of an AAV Capsid Protein by Machine Learning},
  author  = {Drew H. Bryant and Ali Bashir and Sam Sinai and Nina K. Jain and Pierce J. Ogden and Patrick F. Riley and George M. Church and Lucy J. Colwell and Eric D. Kelsic},
  journal = {Nature Biotechnology},
  volume  = {39},
  pages   = {691--696},
  year    = {2021},
  doi     = {10.1038/s41587-020-00793-4}
}

@article{Schymkowitz2005FoldXWeb,
  title   = {The FoldX Web Server: An Online Force Field},
  author  = {Joost Schymkowitz and Jesper Borg and Fran{\c{c}}ois Stricher and Robby Nys and Fr{\'e}d{\'e}ric Rousseau and Luis Serrano},
  journal = {Nucleic Acids Research},
  volume  = {33},
  pages   = {W382--W388},
  year    = {2005},
}

@article{campbell2024generative,
  title={Generative flows on discrete state-spaces: Enabling multimodal flows with applications to protein co-design},
  author={Campbell, Andrew and Yim, Jason and Barzilay, Regina and Rainforth, Tom and Jaakkola, Tommi},
  journal={arXiv preprint arXiv:2402.04997},
  year={2024}
}

@article{campbell2022continuous,
  title={A continuous time framework for discrete denoising models},
  author={Campbell, Andrew and Benton, Joe and De Bortoli, Valentin and Rainforth, Thomas and Deligiannidis, George and Doucet, Arnaud},
  journal={Advances in Neural Information Processing Systems},
  volume={35},
  pages={28266--28279},
  year={2022}
}

@article{pooladian2023multisample,
  title={Multisample flow matching: Straightening flows with minibatch couplings},
  author={Pooladian, Aram-Alexandre and Ben-Hamu, Heli and Domingo-Enrich, Carles and Amos, Brandon and Lipman, Yaron and Chen, Ricky TQ},
  journal={arXiv preprint arXiv:2304.14772},
  year={2023}
}

@inproceedings{Kirjner2024GGS,
  title={Improving protein optimization with smoothed fitness landscapes},
  author={Kirjner, Andrew and Yim, Jason and Samusevich, Raman and Bracha, Shahar and Jaakkola, Tommi S. and Barzilay, Regina and Fiete, Ila R.},
  booktitle={International Conference on Learning Representations},
  year={2024}
}

@inproceedings{gonzalez2024poli,
  title={Poli: A Library of Discrete Sequence Objectives},
  author={Gonz{\'a}lez-Duque, Miguel and Bartels, Simon and Michael, Richard},
  booktitle={Advances in Neural Information Processing Systems},
  volume={37},
  year={2024},
}

@misc{lipman2024flowmatchingguidecode,
      title={Flow Matching Guide and Code}, 
      author={Yaron Lipman and Marton Havasi and Peter Holderrieth and Neta Shaul and Matt Le and Brian Karrer and Ricky T. Q. Chen and David Lopez-Paz and Heli Ben-Hamu and Itai Gat},
      year={2024},
      eprint={2412.06264},
      archivePrefix={arXiv},
}

@article{Nijkamp2023ProGen2,
  title={ProGen2: Exploring the Boundaries of Protein Language Models},
  author={Nijkamp, Erik and Ruffolo, Jeffrey A and Weinstein, Eli N and Naber, Christopher and Madani, Ali},
  journal={Cell Systems},
  volume={14},
  number={11},
  pages={968--978},
  year={2023},
  publisher={Elsevier}
}

@inproceedings{Notin2022Tranception,
  title={Tranception: Protein Fitness Prediction with Autoregressive Transformers and Inference-time Retrieval},
  author={Notin, Pascal and Dias, Mafalda and Frazer, Jonathan and Marchena-Hurtado, Javier and Gomez, Aidan and Marks, Debora and Gal, Yarin},
  booktitle={International Conference on Machine Learning},
  pages={16990--17017},
  year={2022},
  organization={PMLR}
}

@inproceedings{Sahoo2024MDLM,
  title={Simple and Effective Masked Diffusion Language Models},
  author={Sahoo, Subham Sekhar and Arriola, Marianne and Schiff, Yair and Gokaslan, Aaron and Marroquin, Edgar and Chiu, Justin T and Rush, Alexander and Kuleshov, Volodymyr},
  booktitle={Advances in Neural Information Processing Systems},
  volume={37},
  year={2024}
}

@article{Alamdari2023EvoDiff,
  title={Protein Generation with Evolutionary Diffusion: {S}equence is All You Need},
  author={Alamdari, Sarah and Thakur, Nitya and van den Berg, Rianne and Lu, Alex X and Fusi, Nicolo and Louber, Ava P and Madani, Ali},
  journal={bioRxiv},
  pages={2023.09.11.556673},
  year={2023},
  publisher={Cold Spring Harbor Laboratory}
}

@inproceedings{Dhariwal2021Classifier,
  title={Diffusion Models Beat {GAN}s on Image Synthesis},
  author={Dhariwal, Prafulla and Nichol, Alexander},
  booktitle={Advances in Neural Information Processing Systems},
  volume={34},
  pages={8780--8794},
  year={2021}
}

@article{Ho2022ClassifierFree,
  title={Classifier-Free Diffusion Guidance},
  author={Ho, Jonathan and Salimans, Tim},
  journal={arXiv preprint arXiv:2207.12598},
  year={2022}
}

@inproceedings{Jang2017GumbelSoftmax,
  title={Categorical Reparameterization with {G}umbel-Softmax},
  author={Jang, Eric and Gu, Shixiang and Poole, Ben},
  booktitle={International Conference on Learning Representations},
  year={2017}
}

@misc{Bengio2013StraightThrough,
      title={Estimating or Propagating Gradients Through Stochastic Neurons for Conditional Computation}, 
      author={Yoshua Bengio and Nicholas Léonard and Aaron Courville},
      year={2013},
      eprint={1308.3432},
      archivePrefix={arXiv},
      primaryClass={cs.LG},
}

@inproceedings{Nisonoff2024DiscreteGuidance,
  title={Unlocking Guidance for Discrete State-Space Diffusion and Flow Models},
  author={Nisonoff, Hunter and Xiong, Junhao and Alber, Stephan and Listgarten, Jennifer},
  booktitle={International Conference on Learning Representations},
  year={2025}
}

@article{Yang2019MLDE,
  title={Machine-Learning-Guided Directed Evolution for Protein Engineering},
  author={Yang, Kevin K and Wu, Zachary and Arnold, Frances H},
  journal={Nature Methods},
  volume={16},
  number={8},
  pages={687--694},
  year={2019},
  publisher={Nature Publishing Group}
}

@article{garciaortegon2022dockstring,
  title={{DOCKSTRING}: Easy Molecular Docking Yields Better Benchmarks for Ligand Design},
  author={Garc{\'\i}a-Orteg{\'o}n, Miguel and Simm, Gregor NC and Tripp, Austin J and Hern{\'a}ndez-Lobato, Jos{\'e} Miguel and Bender, Andreas and Bacallado, Sergio},
  journal={J. Chem. Inf. Model.},
  volume={62},
  number={15},
  pages={3486--3502},
  year={2022}
}

@article{krenn2020selfies,
  title={Self-Referencing Embedded Strings ({SELFIES}): A 100\% Robust Molecular String Representation},
  author={Krenn, Mario and H{\"a}se, Florian and Nigam, AkshatKumar and Friederich, Pascal and Aspuru-Guzik, Al{\'a}n},
  journal={Mach. Learn.: Sci. Technol.},
  volume={1},
  number={4},
  pages={045024},
  year={2020}
}

@article{bickerton2012quantifying,
  title={Quantifying the Chemical Beauty of Drugs},
  author={Bickerton, G Richard and Paolini, Gaia V and Besnard, J{\'e}r{\'e}my and Muresan, Sorel and Hopkins, Andrew L},
  journal={Nat. Chem.},
  volume={4},
  number={2},
  pages={90--98},
  year={2012}
}

@article{brookes2018design,
  title={Design by adaptive sampling},
  author={Brookes, David H and Listgarten, Jennifer},
  journal={arXiv preprint arXiv:1810.03714},
  year={2018}
}
\bibliographystyle{icml2026}

\newpage
\appendix
\onecolumn
\section{Appendix.}


\subsection{Proof of Theorem~\ref{thm:fwd_consistency}}
\label{app:proofs}
\textbf{Theorem~\ref{thm:fwd_consistency}.} \textit{Let $p^*(\rvx) \propto p_{\mathrm{prior}}(\rvx) w(\rvx)$ be the target distribution, where $w(\rvx) = p(y \ge \tau \mid \rvx)$ and $p_{\mathrm{prior}}$ is the uniform distribution. Under standard DFM assumptions (masked-source coupling, convex interpolant paths, and strictly positive scheduler), the global minimiser $\phi^*$ of the Forward-KL AFM objective yields a terminal distribution $q_1^{\phi^*} = p^*$ almost everywhere.}

\begin{proof}
The proof proceeds in two stages: first, we establish that the Forward-KL AFM objective is equivalent to the standard Discrete Flow Matching (DFM) objective trained on the target distribution $p^*$. Second, we invoke existing consistency results for DFM to show that this leads to the correct terminal distribution.

\textbf{1. Objective Equivalence.}

The Forward-KL AFM objective (Eq.~\ref{eq:fwd_snis} in the main text) estimates the following expected loss over the prior distribution:
\begin{equation}
    \mathcal{L}_{\mathrm{AFM}}(\phi) = \mathbb{E}_{t \sim \mathcal{U}[0,1]} \mathbb{E}_{\rvx_1 \sim p_{\mathrm{prior}}} \left[ w(\rvx_1) \cdot \ell_\phi(\rvx_1, t) \right],
\end{equation}
where, under the masked-source coupling with $\rvx_0 = m$ deterministic,
\begin{equation}
    \ell_\phi(\rvx_1, t) = \mathbb{E}_{\rvx_t \sim p_t(\cdot \mid m, \rvx_1)} \left[-\log q_\phi(\rvx_1 \mid \rvx_t, t)\right]
\end{equation}
is the standard flow matching loss for a single datum $\rvx_1$ at time $t$. We can rewrite the outer expectation over $\rvx_1$ as:
\begin{equation}
    J(\phi) = \sum_{\rvx_1 \in \mathcal{X}} p_{\mathrm{prior}}(\rvx_1) w(\rvx_1) \ell_\phi(\rvx_1),
\end{equation}
where we have absorbed the time expectation into $\ell_\phi$.

By definition, the target distribution is $p^*(\rvx_1) = \frac{1}{Z} p_{\mathrm{prior}}(\rvx_1) w(\rvx_1)$, which implies the identity $p_{\mathrm{prior}}(\rvx_1) w(\rvx_1) = Z p^*(\rvx_1)$. Substituting this into the objective:
\begin{equation}
    J(\phi) = \sum_{\rvx_1 \in \mathcal{X}} Z p^*(\rvx_1) \ell_\phi(\rvx_1) = Z \cdot \mathbb{E}_{\rvx_1 \sim p^*} \left[ \ell_\phi(\rvx_1) \right].
\end{equation}

Since $Z = \mathbb{E}_{p_{\mathrm{prior}}}[w(\rvx)]$ is a constant independent of $\phi$, minimising $\mathcal{L}_{\mathrm{AFM}}(\phi)$ is equivalent to minimising the standard DFM objective $\mathcal{L}_{\mathrm{DFM}}(\phi; p^*) = \mathbb{E}_{\rvx_1 \sim p^*} [\ell_\phi(\rvx_1)]$.

\textit{Implementation Note:} In Algorithm~\ref{alg:afm_fwd}, we approximate this objective using a mixture proposal $\mu(\rvx_1)$ (Eq.~\ref{eq:mixture}). By using importance weights from a mixture distribution, the empirical estimator is a consistent estimator of $J(\phi)$, preserving the theoretical minimum derived here.

\textit{SNIS Consistency.} In practice, we use self-normalised importance sampling, which introduces finite-sample bias of order $O(1/K)$. However, SNIS is consistent: as $K \to \infty$, the estimator converges to the true expectation, and the minimiser of the empirical objective converges to $\phi^*$.

\textbf{2. Consistency of the Minimiser.}

Having established that we are effectively minimising the standard DFM objective for the target distribution $p^*$, the result follows from the consistency guarantees established in \citet{Gat2024DFM}:

\begin{itemize}
    \item Optimal Posterior Recovery (Prop.~5): The global minimiser $\phi^*$ of the cross-entropy loss $\mathcal{L}_{\mathrm{DFM}}(\phi; p^*)$ recovers the true token-level conditional posterior of the target distribution: $p_{\phi^*}(x_1^i \mid \rvx_t, t) = p^*(x_1^i \mid \rvx_t, t)$ for each token $i \in [N]$.
    
    \item Conditional Velocity (Thm.~3): For the convex interpolant path family, the probability velocity field constructed from this posterior via Eq.~\ref{eq:velocity} generates the conditional probability path $p_t(\cdot \mid m, \rvx_1)$.
    
    \item Marginal Velocity (Thm.~2): The marginal velocity field, obtained by averaging the conditional velocity under the posterior $p_t(\rvx_0, \rvx_1 \mid \rvx_t)$, generates the marginal probability path $p_t^*$.
    
    \item Terminal Distribution: Iterating the sampling rule $x_{t+h}^i \sim \delta_{\rvx_t^i}(\cdot) + h \, u_t^i(\cdot, \rvx_t)$ produces samples from $p_t^*$. By the boundary condition of the interpolant path, $p_1^* = p^*$, so the terminal distribution satisfies $q_1^{\phi^*} = p^*$.
\end{itemize}

Thus, the generated distribution matches the target $p^*$ almost everywhere.
\end{proof}
\subsection{Implementation Details}
\label{app:implementation}

\subsubsection{Ehrlich Task Hyperparameters}
We utilise the implementation provided by \citet{Stanton2024Ehrlich}. The tasks define a sequence length of $L=32$ (or $L=64$) with an alphabet size of $|\mathcal{V}|=20$.

\begin{table}[h]
    \centering
    \caption{Hyperparameters and dynamic scheduling for Ehrlich experiments.}
    \label{tab:ehrlich_hyperparams}
    \begin{tabular}{ll}
        \toprule
        \textbf{Parameter} & \textbf{Value / Schedule} \\
        \midrule
        \multicolumn{2}{c}{\textit{Generative Flow \& Classifier Architecture}} \\
        \midrule
        Architecture & Bi-LSTM \\
        Hidden Dimension & $256$ \\
        Embedding Dimension & $128$ \\
        Layers & $3$ \\
        Dropout & $0.1$ \\
        Time Scheduler & Quadratic ($t^2$) \\
        \midrule
        \multicolumn{2}{c}{\textit{Active Generation Loop (Round $r$)}} \\
        \midrule
        Gradient Steps per Round & $2{,}000$ \\
        Optimiser & AdamW \\
        Learning Rate & $1 \times 10^{-4} \times 0.95^r$ \\
        Threshold $\tau$ & $90^{\text{th}}$ percentile (dynamic) \\
        \midrule
        \multicolumn{2}{c}{\textit{Mixing Coefficients (Round $r$)}} \\
        \midrule
        Replay ($\alpha_{\mathrm{rb}}$) & $\min(0.4, 0.05 \times r)$ \\
        Flow ($\alpha_{\mathrm{flow}}$) & $(1 - \alpha_{\mathrm{rb}}) \times 0.1$ (early) \\
                                        & $(1 - \alpha_{\mathrm{rb}}) \times 0.95$ (late) \\
        Prior ($\alpha_0$) & $1 - \alpha_{\mathrm{rb}} - \alpha_{\mathrm{flow}}$ \\
        Buffer Temp ($\gamma$) & $0.3$ \\
        \bottomrule
    \end{tabular}
\end{table}

\paragraph{Proposal Schedule.}
The importance sampling proposal $\mu(\rvx_1)$ is adjusted dynamically across two regimes. In early rounds, when the replay buffer is small, we sample primarily from the prior to encourage broad exploration. Once the replay buffer exceeds the batch size, the flow component dominates ($\alpha_{\mathrm{flow}} \approx 95\%$ of non-replay probability), enabling local refinement around promising regions. Throughout training, the replay buffer contribution ($\alpha_{\mathrm{rb}}$) increases linearly from $0\%$ to $40\%$, progressively shifting focus toward exploitation of confirmed high-fitness sequences.

\subsection{FoldX Task Hyperparameters}

The FoldX task uses the same generative flow and classifier architecture as Ehrlich (Table~\ref{tab:ehrlich_hyperparams}), with sequence length $L=228$ and alphabet size $|\mathcal{V}|=21$ (20 amino acids plus a padding token).

\begin{table}[h]
    \centering
    \caption{Hyperparameters for FoldX experiments. Unlisted parameters match Table~\ref{tab:ehrlich_hyperparams}.}
    \label{tab:foldx_hyperparams}
    \begin{tabular}{ll}
        \toprule
        \textbf{Parameter} & \textbf{Value / Schedule} \\
        \midrule
        \multicolumn{2}{c}{\textit{Mixing Coefficients (Round $r$)}} \\
        \midrule
        Replay ($\alpha_{\mathrm{rb}}$) & $0.3$ (fixed) \\
        Flow ($\alpha_{\mathrm{flow}}$) & $(1 - \alpha_{\mathrm{rb}}) \times 0.15$ ($r < 5$) \\
                                        & $(1 - \alpha_{\mathrm{rb}}) \times 0.35$ ($r \geq 5$) \\
        Prior ($\alpha_0$) & $1 - \alpha_{\mathrm{rb}} - \alpha_{\mathrm{flow}}$ \\
        Buffer Temp ($\gamma$) & $0.2$ \\
        Mutation Budget $K$ & $3$ \\
        \bottomrule
    \end{tabular}
\end{table}

\paragraph{Source Distribution.}
Unlike the Ehrlich task, in the foldx task, we restrict the number of mutations that the flow can make to the source sequence. The flow source $x_0$ is sampled from the replay buffer rather than a uniform prior. This focuses training on local refinement around known high-fitness regions.

\paragraph{Constrained Sampling.}
We enforce a mutation budget $K \leq 3$ by modifying the Euler sampler to track edits from a reference sequence $x_{\mathrm{ref}}$. At each step, we sample position and destination token proportionally to the predicted rates, masking self-transitions and padding tokens. The process terminates after $K$ mutations or when no favourable transitions remain.

\subsection{AAV Capsid Design Hyperparameters}

For the AAV task ($L=28$, $|\mathcal{V}|=20$), we utilise Transformer-based architectures for both the generative flow and classifier. The ground-truth fitness is evaluated using a pre-trained Convolutional Neural Network (CNN) oracle.

\begin{table}[h]
    \centering
    \caption{Hyperparameters for AAV Capsid Optimisation.}
    \label{tab:aav_hyperparams}
    \begin{tabular}{ll}
        \toprule
        \textbf{Parameter} & \textbf{Value / Schedule} \\
        \midrule
        \multicolumn{2}{c}{\textit{Generative Flow Model}} \\
        \midrule
        Architecture & Transformer \\
        Layers & $5$ \\
        Heads & $8$ \\
        Hidden Dim & $256$ \\
        Dropout & $0.1$ \\
        \midrule
        \multicolumn{2}{c}{\textit{Classifier Model}} \\
        \midrule
        Architecture & Transformer Encoder \\
        Layers & $3$ \\
        Heads & $8$ \\
        Embedding Dim & $128$ \\
        Feedforward Dim & $512$ \\
        Dropout & $0.25$ \\
        \midrule
        \multicolumn{2}{c}{\textit{Active Generation Loop (Round $r$)}} \\
        \midrule
        Gradient Steps & $1{,}000$ \\
        Optimiser & AdamW \\
        Learning Rate & $1 \times 10^{-4} \times 0.95^r$ \\
        Threshold $\tau$ & $13.5$ (initial), $14.0$ ($r \geq 2$) \\
                         & $15.0$ ($r \geq 5$), $15.5$ ($r \geq 6$) \\
        \midrule
        \multicolumn{2}{c}{\textit{Mixing Coefficients (Round $r$)}} \\
        \midrule
        Replay ($\alpha_{\mathrm{rb}}$) & $0.3$ (fixed) \\
        Flow ($\alpha_{\mathrm{flow}}$) & $(1 - \alpha_{\mathrm{rb}}) \times 0.35$ (early) \\
                                        & $(1 - \alpha_{\mathrm{rb}}) \times 0.95$ (late) \\
        Prior ($\alpha_0$) & $1 - \alpha_{\mathrm{rb}} - \alpha_{\mathrm{flow}}$ \\
        Buffer Temp ($\gamma$) & $0.4$ \\
        \bottomrule
    \end{tabular}
\end{table}

\paragraph{Oracle Details.}
The fitness landscape is defined by a CNN oracle trained on the dataset from \citet{Bryant2021AAV}. Scores are denormalised to the original scale for evaluation.

\paragraph{Proposal Strategy.}
Unlike the Ehrlich task, the AAV experiment uses a fixed replay ratio ($\alpha_{\mathrm{rb}} = 0.3$) throughout optimisation. As with Ehrlich, the flow component dominates once the replay buffer accumulates sufficient samples ($\alpha_{\mathrm{flow}} \approx 66\%$ of non-replay probability).

\subsection{Dockstring Molecular Docking Hyperparameters}

For the Dockstring task, molecules are represented as SELFIES strings and vocabulary size $|\mathcal{V}| \approx 100$ (varies by dataset). We use an LSTM-based architecture for the generative flow and an LSTM classifier for the CPE.

\begin{table}[h]
    \centering
    \caption{Hyperparameters for Dockstring (F2/Thrombin) experiments.}
    \label{tab:dockstring_hyperparams}
    \begin{tabular}{ll}
        \toprule
        \textbf{Parameter} & \textbf{Value / Schedule} \\
        \midrule
        \multicolumn{2}{c}{\textit{Generative Flow Model}} \\
        \midrule
        Architecture & Bi-LSTM \\
        Hidden Dimension & $512$ \\
        Embedding Dimension & $256$ \\
        Layers & $4$ \\
        Dropout & $0.1$ \\
        Time Scheduler & Quadratic ($t^2$) \\
        \midrule
        \multicolumn{2}{c}{\textit{Classifier (CPE) Model}} \\
        \midrule
        Architecture & Bi-LSTM \\
        Hidden Dimension & $256$ \\
        Embedding Dimension & $128$ \\
        Layers & $3$ \\
        Dropout & $0.25$ \\
        \midrule
        \multicolumn{2}{c}{\textit{Active Generation Loop (Round $r$)}} \\
        \midrule
        Pretraining Steps & $5{,}000$ \\
        Gradient Steps per Round & $1{,}000$ \\
        Optimiser & AdamW \\
        Learning Rate (Pretrain) & $1 \times 10^{-3}$ \\
        Learning Rate (IW) & $1 \times 10^{-4}$ \\
        \midrule
        \multicolumn{2}{c}{\textit{Threshold Schedule}} \\
        \midrule
        Initial $\tau$ & $80^{\text{th}}$ percentile \\
        Final $\tau$ & $95^{\text{th}}$ percentile \\
        \midrule
        \multicolumn{2}{c}{\textit{Mixing Coefficients (Round $r$, $R=100$)}} \\
        \midrule
        Replay ($\alpha_{\mathrm{rb}}$) & $0.95 \to 0.70$ (linear) \\
        Flow ($\alpha_{\mathrm{flow}}$) & $0.10 \to 0.40$ (linear) \\
        Prior ($\alpha_0$) & $0.0$ (fixed) \\
        Buffer Temp ($\gamma$) & $0.5$ \\
        Buffer Size & $1{,}500$ \\
        \bottomrule
    \end{tabular}
\end{table}

\paragraph{Objective Function.}
Following~\citet{garciaortegon2022dockstring}, we optimise docking score with a QED penalty: $f(\rvx) = s_{\text{dock}}(\rvx) - \lambda(1 - \text{QED}(\rvx))$ where $\lambda = 7.5$. 

\paragraph{Proposal Schedule.}
Unlike the protein tasks, the molecular docking task uses a replay-heavy strategy throughout optimisation. The replay buffer is seeded with the top 2,500 molecules from the initial pool. The replay contribution decreases from $95\%$ to $70\%$ over 100 rounds, while the flow contribution increases correspondingly from $10\%$ to $40\%$. This conservative schedule reflects the larger search space and sparser reward landscape of molecular optimisation.


\end{document}